\documentclass{article}
\usepackage{iclr2027_conference,times}
\usepackage{microtype}

\usepackage{amsmath,amsfonts,bm}

\def\eqref#1{equation~(\ref{#1})}

\def\1{\bm{1}}

\DeclareMathAlphabet{\mathsfit}{\encodingdefault}{\sfdefault}{m}{sl}
\SetMathAlphabet{\mathsfit}{bold}{\encodingdefault}{\sfdefault}{bx}{n}

\newcommand{\E}{\mathbb{E}}

\newcommand{\numpaths}{K}                 %
\newcommand{\vocab}{\mathcal{V}}          %
\newcommand{\logit}{\ell}                 %
\newcommand{\logitvec}{\boldsymbol{\ell}} %
\newcommand{\divfn}{\Delta}              %
\newcommand{\cnt}{c}                      %
\newcommand{\cntvec}{\mathbf{c}}          %
\newcommand{\pbar}{\bar{p}}               %
\newcommand{\dose}{\lambda}               %
\newcommand{\efftilt}{\tilde{\lambda}}    %
\newcommand{\temp}{\tau}                  %

\newcommand{\plur}{\mathrm{plur}}         %
\newcommand{\passat}{\mathrm{pass}}       %

\newcommand{\kl}[2]{\mathrm{KL}\!\left(#1 \,\middle\|\, #2\right)}
\newcommand{\partf}{Z}                    %
\newcommand{\gate}{g}                      %

\usepackage{hyperref}
\usepackage{url}
\usepackage{amsmath}
\usepackage{amssymb}
\usepackage{amsthm}
\usepackage{booktabs}
\usepackage{float}
\floatstyle{ruled}
\newfloat{algorithm}{t}{loa}
\floatname{algorithm}{Algorithm}
\usepackage{algpseudocode}
\usepackage{graphicx}
\usepackage{xcolor}
\usepackage{array}

\usepackage{pifont}
\usepackage{xspace}
\usepackage{listings}

\lstdefinestyle{trace}{basicstyle=\ttfamily\scriptsize,backgroundcolor=\color{black!3},frame=single,framerule=0.4pt,rulecolor=\color{black!12},framesep=5pt,columns=fullflexible,keepspaces=true,breaklines=true,breakindent=0pt,xleftmargin=0pt,aboveskip=6pt,belowskip=10pt,upquote=true}

\newcommand{\circled}[1]{\ding{\the\numexpr171+#1\relax}}

\newcommand{\methodnamelong}{Self-Repulsion\xspace}
\newcommand{\methodname}{SR\xspace}

\theoremstyle{plain}
\newtheorem{proposition}{Proposition}

\newtheorem{lemma}[proposition]{Lemma}
\newtheorem{corollary}[proposition]{Corollary}
\theoremstyle{definition}

\newtheorem{remark}[proposition]{Remark}

\title{Self-Repulsive Sampling for \\ Diffusion Language Models}

\author{Michael Helcig \\
ETH Zürich \\
\texttt{mhelcig@ethz.ch} \\
\And
Martin Jaggi \\
EPFL \\
\texttt{martin.jaggi@epfl.ch}
}

\iclrfinalcopy
\begin{document}

\maketitle
\lhead{Preprint}

\begin{abstract}
Sampling several responses and voting over their answers can improve a language model's accuracy, but repeated answers limit the benefit of additional samples. Raising temperature increases diversity at a potential cost to per-sample accuracy. We introduce \methodnamelong (\methodname), a sampler for masked diffusion language models that uses peer commitments to diversify the pool. At each penalized denoising step, each path lowers a token's logit according to how many peers have committed that token at the same position. Paths share a batched forward pass and then commit in sequence, so later paths observe choices made earlier in the same step. This coupling requires no training or additional forward or backward pass and can produce distinct paths even at temperature zero. When all paths commit a position together from identical logits, the update exactly maximizes total logit minus a convex duplication cost. On LLaDA-8B-Instruct with ten paths and 128 denoising steps, deterministic \methodname reaches $80.38\%$ plurality accuracy on GSM8K, compared with $70.17\%$ for the unpenalized greedy decoder. At temperature $0.6$ and matched model-evaluation budgets, the count penalty improves over self-consistency by $2.06$ percentage points in blocks of 32 and $14.50$ under pure diffusion. Experiments on GSM8K, MATH and TruthfulQA show that voting gains arise mainly from higher coverage of correct answers, with gains that vary by benchmark and decoding regime.

\end{abstract}

\section{Introduction}
\label{sec:intro}

Sampling several reasoning paths and voting over their answers is a standard way to improve a language model's accuracy with additional inference computation. Self-consistency \citep{wang2023selfconsistency} samples chain-of-thought paths independently \citep{wei2022chain} and returns the most frequent final answer, without training a verifier. Its success depends on the samples available to the vote. In particular, \emph{coverage}, the probability that the pool contains a correct answer, upper-bounds the accuracy of any selector over that pool. Repeated answers can limit the benefit of additional samples, but disagreement alone is insufficient: a more diverse pool need not yield a more accurate vote \citep{bay2026sampling,olausson2026twotemps}. The sampling problem is therefore to find more correct answers while retaining enough evidence for the vote to select them.

Temperature offers a simple way to change the pool. At low temperature, samples concentrate on a few answers; at higher temperature, they explore more alternatives but can lose per-sample accuracy. Temperature and truncation act on each sample's own distribution, regardless of what other samples have already produced. Couplings based on shared randomness can spread the draws while preserving their marginal distributions \citep{vilnis2023arithmetic,parashar2024quasi}. We investigate a different signal: the tokens that other paths have already committed while the ensemble is being generated. Can this information improve coverage without relying on additional sampling noise?

Masked diffusion language models (MDLMs), a prominent form of discrete diffusion, are an emerging alternative to autoregressive generation \citep{li2025dlmsurvey}. A response begins fully masked and the decoder runs $T$ denoising steps, each a single forward pass, or \emph{network function evaluation} (NFE). Each step predicts every still-masked position in parallel \citep{ye2025dream} and commits the most confident \citep{nie2025llada}. Generation costs $T$ NFEs rather than one forward pass per token, which industrial systems increasingly build on \citep{inception2025mercury,song2025seeddiffusion,diffusiongemma2026}. This structure makes an alternative source of ensemble diversity available at almost no cost. When $\numpaths$ samples decode the same positions on a shared schedule, each reads its peers' current commitments from the batch the decoder already holds, while the pool is still forming; an any-order decoder can then defer a contested position and repair around a substitution, which a fixed-order decoder cannot. This signal has a timing. Between steps the pool is fixed, so a penalty read at the \emph{start} of a step sees only the commitments of earlier steps. Read \emph{within} a step, as the paths commit in turn, it also sees the tokens peers commit earlier in that same step. \textbf{Released decoders leave this signal unused;} in the case of LLaDA they even use greedy (argmax) decoding by default \citep{nie2025llada}.

We introduce \methodnamelong (\methodname), a sampler that uses this within-step information. At each penalized step, a path lowers a token's logit in proportion to the number of peers that have committed that token at the same position. All paths share one batched forward pass, then commit in a fixed order; each path reads the updated peer counts before making its choices. We call this ordered update the \emph{cascade}. The confidence-based commit rule and final plurality vote remain unchanged (Figure~\ref{fig:mechanism}, Section~\ref{sec:method}). The penalty requires no training, learned potential, or additional forward or backward pass.

Temperature zero isolates the role of the cascade. With identical initial canvases, a shared deterministic update and path-independent tie-breaking, paths that read the pool only at the start of a step remain identical (Proposition~\ref{prop:degeneracy}). The cascade can break this symmetry because later paths observe earlier commitments. Deterministic ensembles and diversity penalties have precedents in CoT-decoding, Diverse Beam Search and Orthogonal Diverse Diffusion (ODD) \citep{wang2024cotdecoding,vijayakumar2018diverse,lamont2026odd}; our contribution is to couple masked-diffusion paths through their current commitments at each position and evaluate the resulting pools under a fixed voting rule.

\paragraph{Contributions:}
\begin{itemize}
  \item \textbf{\methodnamelong (Section~\ref{sec:method}).} A drop-in sampler that draws ensemble diversity from the pool, not from noise: at each still-masked position it lowers each token's logit in proportion to how strongly the peers have committed to that token, adding no extra forward or backward pass on a shared schedule.
  \item \textbf{Ensembles without randomness.} At temperature zero a symmetric start-of-step sampler is degenerate: its $\numpaths$ samples coincide, so the vote cannot beat a single decode. Reading the peers within the step can make the paths differ. When all paths commit a position together from identical logits, the cascade maximizes total logit minus a convex cost of duplication. Measured, they vote above self-consistency with no randomness, on GSM8K reaching $80.38$ against self-consistency's $77.05$ and the greedy decode's $70.17$ (128 steps). %
  \item \textbf{Experiments on masked diffusion (Section~\ref{sec:experiments}).} On LLaDA-8B-Instruct over GSM8K, MATH and TruthfulQA, \methodname matches or exceeds the diversity methods we reproduce at matched computation, the gain in coverage, not the vote, and in blocks 2.46 points above a peer-blind control at matched disagreement (Appendix~\ref{sec:results-argmax}). Its gain over self-consistency is larger where duplication is worse: on GSM8K, $+2.06$ points in blocks of 32 and $+14.50$ under pure diffusion, at the same 1280 evaluations.
\end{itemize}

\section{\methodnamelong}
\label{sec:method}

In the following, we characterize \methodnamelong from three angles. As a \emph{sampler}, it is a single additive term on the logits, by which the $\numpaths$ paths of one parallel decode repel one another through their commitments, with no added randomness and no extra forward or backward pass (Section~\ref{sec:method-sampler}). As a \emph{penalty}, at temperature zero it \emph{allocates} the paths at a position they commit together from identical logits, the cascade maximizing an explicit quality-minus-duplication objective (Section~\ref{sec:method-positive}). As one \emph{sampler among others}, its asymmetry is the information order alone, the samples reading one another as the pool forms; at temperature zero a peer-blind ensemble collapses, which reading the peers within the step escapes (Section~\ref{sec:method-boundary}).

\subsection{Sampler}
\label{sec:method-sampler}

\methodnamelong runs $\numpaths$ masked-diffusion \emph{paths} on one prompt in parallel. Each path fills its own \emph{canvas} and the $\numpaths$ canvases together form the \emph{pool}. The paths are coupled through their logits: at every denoising step, at each still-masked position $j$, the law of each path is tilted away from the tokens \emph{peers}, the other paths, have committed there. Within a step the paths commit in turn, each reading the tokens the paths ahead of it committed; we call this ordered commit the \emph{cascade}. Writing~$p$ for the law the path would otherwise sample from and $\divfn$ for a non-negative statistic of the peers' commitments at $j$, for $\temp > 0$ the path samples from
\begin{equation}
  \label{eq:tilt}
  q(v) \;\propto\; p(v)\,\exp\!\left(-\efftilt\,\divfn_v\right) .
\end{equation}
The $\numpaths$ answers are then aggregated by a plurality vote. Self-consistency is the $\dose = 0$ member of this family, with the penalty disabled; this section asks what a positive strength provides.

\paragraph{Cascade.} When each path reads its peers is the one choice in the sampler that is not forced, and it is what makes the method work. The natural implementation reads $\divfn$ once after the batched forward pass and commits all $\numpaths$ paths in parallel; then $\divfn$ is fixed at the start of the step, identical across the paths, so at $\temp = 0$ they commit the same tokens and never diverge (Proposition~\ref{prop:degeneracy}). The cascade instead reads $\divfn$ within the per-path loop, so a path sees what the paths ahead of it committed at this same step. Formally, the $\numpaths$ paths share a common initial canvas, each is scored on its own, and each commits by argmax under an index-free tie-break after one shared update from a statistic $\divfn$ of the pool.

\begin{proposition}[Degeneracy at zero temperature]
  \label{prop:degeneracy}
  At $\temp = 0$, if each path's $\divfn$ is one function, shared across paths, of its own canvas and its peers' canvases, read at the start of the step, the canvases coincide at every step and the pool holds one distinct sample; self-consistency at $\temp = 0$ is the case $\divfn \equiv 0$.
\end{proposition}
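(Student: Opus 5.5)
The plan is an induction on the denoising step $t = 0, 1, \dots, T$, with hypothesis $H_t$: at the start of step $t$ all $\numpaths$ canvases are identical, $x^{(1)}_t = \dots = x^{(\numpaths)}_t =: x_t$. The base case $H_0$ holds because the paths share a common initial canvas, the fully masked response on the same prompt. The final claim then follows at once: after the last step the pool contains $\numpaths$ copies of one response, so the plurality vote returns the answer of that single decode. Taking $\divfn \equiv 0$, which is trivially a shared function of the canvases, recovers self-consistency at $\temp = 0$ as a special case.

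For the inductive step, assume $H_t$ and follow one denoising step through its three stages, showing that each stage outputs the same object for every path.
\begin{enumerate}
\item \textbf{Forward pass.} Each path is scored on its own canvas by the same network. Since the canvases agree, the logits $\logitvec^{(k)}$ agree for every $k$.
\item \textbf{Penalty.} By hypothesis, path $k$ uses $\divfn^{(k)} = F\big(x^{(k)}_t, \{x^{(m)}_t\}_{m \ne k}\big)$ for a single function $F$, evaluated at the start of the step. Because every canvas equals $x_t$, the argument of $F$ is the same for every $k$: the own canvas is $x_t$ and the peer collection is $\numpaths - 1$ copies of $x_t$. So $\divfn^{(k)}$ does not depend on $k$.
\item \textbf{Commit.} The penalized logits $\logitvec - \efftilt\,\divfn$ therefore agree across paths. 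At $\temp = 0$ the commit is deterministic: an argmax at each masked position, with an index-free tie-break, followed by the confidence-based choice of positions to unmask, which is itself a deterministic function of the penalized scores and the shared schedule. Both the committed tokens and the set of committed positions are therefore common to all paths, which gives $H_{t+1}$.
\end{enumerate}

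The main point to handle carefully is the argument of $F$ in stage 2, specifically whether $F$ is allowed to depend on the ordering of the peers or on the path index. I would read the phrase ``one function, shared across paths'' as saying that $F$ depends on its inputs only through the own canvas and the peer multiset. Under that reading, identical canvases give identical peer multisets, and the step goes through. The count statistic of the paper, the number of peers committing $v$ at position $j$, is permutation-invariant and so satisfies this.

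The tie-break needs the same care, which is why the statement assumes it is index-free. Any dependence on the path index $k$ in $F$ or in the tie-break would break the symmetry. The cascade breaks it by a different route: it reads the canvases mid-step, so the argument of $F$ differs across paths. The proof should note explicitly that this route is excluded by ``read at the start of the step.''
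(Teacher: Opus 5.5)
Your proposal is correct and follows essentially the paper's route: induction over denoising steps from the common masked canvas, checking that the forward pass, the shared penalty, the index-free argmax and the index-free commit rule each give the same output when every canvas equals one $w$. Two small refinements. First, the multiset reading of $F$ is unnecessary: when all canvases are equal, the ordered peer tuple is $(w,\dots,w)$ for every $k$, so any shared function of the own canvas and the peer tuple works, order-dependent or not, which is how the paper states it. Second, at $\temp = 0$ the penalized logits are $\logitvec - \gate\,\divfn$, not $\logitvec - \efftilt\,\divfn$, because $\efftilt = \gate/\temp$ is defined only for $\temp > 0$; and positions are ranked by untilted confidence, not by the penalized scores, though both agree across paths, so your argument is unaffected.
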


\paragraph{Setup.} The sampler adds a peer-count penalty to the logits and reads these counts within the ordered commitment loop, while retaining the original decoder's confidence-based commit rule and denoising schedule. With raw logits $\logitvec$ and strength $\dose \ge 0$, every path applies $\logit_k[j,v] \leftarrow \logit_k[j,v] - \tfrac{\dose}{\numpaths - 1}\, \divfn_v$. This splits the strength evenly across the $\numpaths - 1$ peers, so a unanimous pool costs $\dose$ whatever $\numpaths$ is. For $\temp > 0$, dividing by the temperature $\temp$ carries the penalty into the exponent and recovers \eqref{eq:tilt}. We distinguish three related quantities:
\begin{equation}
  \label{eq:strengths}
  \text{strength } \dose, \qquad
  \text{gate } \gate = \dose/(\numpaths-1), \qquad
  \text{tilt } \efftilt = \gate/\temp \quad (\temp > 0).
\end{equation}
At the deployed $\numpaths = 10$ and $\temp = 0.6$, the gate is $\gate = 7.11$ (strength $\dose = 64$, tilt $\efftilt = 11.85$). Across $\numpaths$ we hold the strength $\dose$, not the gate: $\gate\numpaths = \dose\numpaths/(\numpaths-1)$ tends to $\dose$, so the fractional allocation proportions approach a $\numpaths$-independent limit at fixed $\dose$ and flatten toward uniform at fixed $\gate$ (Appendix~\ref{app:abl-k}); the ablation tables report $\dose$ and $\gate$ together.

The penalty is subtracted from the raw logits, before dividing by $\temp$, so for $\temp > 0$, $q$ is an exponential tilt of $p$ that strengthens as $\temp$ falls. The logit penalty remains defined at $\temp = 0$. At $\temp = 0$ the decoder takes $\arg\max_v (\logit_v - \gate\,\cnt_v)$, with $\cnt_v$ the number of peers committed to token $v$ at $j$, so the gate is a margin in raw logits: the top token loses once $\gate$ times its excess peer count over a competitor exceeds its raw-logit lead over that competitor. At the deployed $\temp = 0.6$ the tilt is $\efftilt = 11.85$, so a token even one peer holds is down-weighted by $e^{-\efftilt} \approx 7 \times 10^{-6}$, close to hard exclusion unless its raw-logit lead over the best untaken token exceeds the gate (Appendix~\ref{app:example}).

\begin{figure}[!ht]
  \centering
  \includegraphics[width=0.95\linewidth]{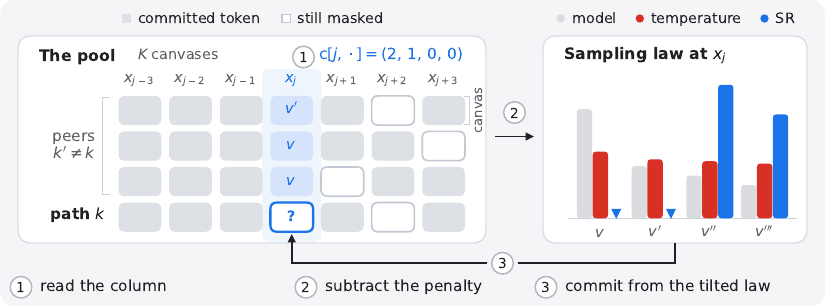}
  \caption{One denoising step of \methodnamelong, four paths, illustrative values. \textbf{Left:} the $\numpaths$ paths decode in parallel; the column at a still-masked position $j$ is the count vector $\cntvec_k[j,\cdot]$, the peers committed to each token (here $(2,1,0,0)$). The step runs in three parts: (1)~read the column, (2)~subtract the penalty $\gate\,\cntvec_k[j,\cdot]$ from the logits, (3)~commit from the tilted law. \textbf{Right:} the sampling law at $x_j$ under the model, under temperature and under \methodname. The tilt down-weights tokens peers hold and preserves probability ratios between tokens with equal counts; temperature reshapes every token, the taken ones included. Values are schematic except the effective tilt $e^{-11.85}$ per peer.}
  \label{fig:mechanism}
\end{figure}

\paragraph{Penalty statistic.} We deploy the \emph{count} statistic, $\divfn_v = \cnt_k[j,v]$, the number of peers committed to token $v$ at position $j$ when path $k$ reads the pool. We suppress $k$ and $j$ below. Appendices~\ref{app:abl-sweeps} and~\ref{app:abl-crossover} give two variants that read the peers' predicted distributions instead, which can down-weight a token before any peer commits it but degenerate at $\temp = 0$, where those distributions are start-of-step state (Proposition~\ref{prop:degeneracy}); each induces \eqref{eq:tilt}. Nothing here is specific to masked diffusion beyond what an any-order decoder does with a pushed-off path, which a fixed-order decoder cannot. It can defer a contested position: a path pushed off a peer-held token scores that position by its replacement's lower untilted confidence and so defers it, and at $\temp = 0$ and gate 8 only 3.00\% of the commits made under the penalty in blocks of 32 differ from the path's own argmax, each where a peer already holds that argmax. And it can repair around a substitution, the neighbors still masked and attention bidirectional, so a perturbation stays local (Appendix~\ref{app:example}); the confidence order helps both, penalizing early structural commits and releasing before the late uncertain ones. Masked diffusion is the home of the method because it can defer and repair, not because it keeps the logits shared. In a fixed-order decoder this within-step count read is Diverse Beam Search's Hamming penalty at one path per group \citep{vijayakumar2018diverse}; the two properties above are what any-order decoding adds.

\paragraph{Cost.} \methodname adds no extra forward pass, no backward pass, no learned potential and no training. The paths share a canvas layout and block schedule, so peer counts can be computed directly from the commitments already stored in the batch. The added computation consists of reading these counts and subtracting the corresponding penalty from each path's logits. The original decoder's commit and remasking rules are unchanged: positions are ranked by the untilted probability of the selected token. Changing that token can therefore change when a position is committed. The penalty is active during the first three-quarters of each block's denoising steps and disabled thereafter. Algorithm~\ref{alg:sr} gives the full procedure.

\begin{algorithm}[t]
\caption{\methodnamelong. Shading marks the peer-count read and logit penalty, being the only modification to self-consistency. Counts are read \emph{inside} the cascade, so later paths see earlier commitments from the same step. $s$ is the prompt, $j$ indexes positions, and $v$ indexes tokens. Token updates apply only at still-masked positions.}
\label{alg:sr}
\small
\begin{algorithmic}[1]
\Require model $f_\theta$, prompt $s$, paths $\numpaths > 1$,
gate $\gate = \dose/(\numpaths - 1)$, temperature $\temp$, steps $T$,
penalty scope $\rho$ (active during the first fraction $\rho$ of each block)

\State $\mathbf{x}_k \gets (\texttt{[MASK]}, \dots, \texttt{[MASK]})$
for every path $k \le \numpaths$

\For{$t = 1, \dots, T$}
  \State $\logitvec_k \gets f_\theta(s, \mathbf{x}_k)$
    for all $k$
    \Comment{one batched forward pass}

  \For{$k = 1, \dots, \numpaths$}
    \Comment{cascade, fixed path order}

    \State \colorbox{blue!6}{\strut
      $\cnt_k[j,v] \gets
      \#\{k' \ne k : x_{k'}[j] = v \ne \texttt{[MASK]}\}$}
      at masked $j$ \Comment{sees this step's earlier commits}

    \State \colorbox{blue!6}{\strut
      $\logitvec'_k \gets
      \logitvec_k - \gate\,\cntvec_k$
      if the penalty is active, else $\logitvec_k$}

    \State $\tilde{x}_k[j] \gets \arg\max_v \logit'_k[j,v]$
      if $\temp = 0$, else
      $\tilde{x}_k[j] \sim
      \mathrm{softmax}(\logitvec'_k[j,\cdot]/\temp)$

    \State commit $\tilde{x}_k[j]$ where the original confidence rule
      selects (scored by untilted probability); other positions stay masked
  \EndFor
\EndFor

\State \Return
  $\arg\max_a \sum_k
  \mathbf{1}\{\mathrm{answer}(\mathbf{x}_k) = a\}$,
  ties uniform
  \Comment{plurality vote}
\end{algorithmic}
\end{algorithm}

The closest analogue, ODD, spends a gradient step on the logits per step, orthogonalizing each sample in a pooled feature space \citep{lamont2026odd}; particle guidance needs a fixed or learned potential \citep{corso2024particle}; determinantal beam search needs a similarity kernel and greedy log-determinant maximization \citep{meister2021determinantal}. \methodname needs only the resident batch column its paths already share, without additional model evaluations.

The processing order inside the cascade, held fixed across steps, is a free choice that does not change the pool: at temperature zero every such order yields the same multiset (Remark~\ref{prop:canonicity}, Appendix~\ref{app:proofs}).

\subsection{Penalty}
\label{sec:method-positive}

At temperature zero the penalty \emph{allocates}: where the paths commit a position together from identical logits, the cascade maximizes total logit minus a duplication penalty. Proofs are in Appendices~\ref{app:allocation} and~\ref{app:sparsemax}.

\paragraph{Allocation at a shared position.} Suppose all $\numpaths$ paths commit the same position in one step and face identical logits. Here $\cnt_v$ counts the paths allocated to token $v$. Its successive choices have scores $\logit_v,\ \logit_v-\gate,\ \logit_v-2\gate,\ldots$: each commitment lowers that token's next score by $\gate$. The cascade takes the $\numpaths$ largest scores across these descending sequences. The \emph{water level} is a common cutoff separating selected from unselected scores; it is determined directly by the resulting allocation (Figure~\ref{fig:waterlevel} in Appendix).

\begin{proposition}[Allocation at a shared position]
  \label{prop:allocation}
  At $\temp = 0$, at a step where all $\numpaths$ paths commit the same position, none having committed it before, and face the same logits $\logitvec$, the commitment counts $\cntvec$ exactly maximize $\sum_v \cnt_v \logit_v - \gate \sum_v \binom{\cnt_v}{2}$ over $\{\cntvec \in \mathbb{Z}_{\ge 0}^{\vocab} : \sum_v \cnt_v = \numpaths\}$, by water-filling against a common level: the penalty allocates the paths by trading total logit against the pool's duplication, one gate $\gate$ per agreeing pair. Away from shared logits each path still commits $\arg\max_v (\logit_k[j,v] - \gate\, \cnt_v)$, but the global optimality guarantee no longer applies.
\end{proposition}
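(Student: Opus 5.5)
The plan is to separate the objective into independent per-token marginal contributions and then show that the cascade is exactly the greedy algorithm for a separable concave maximization under a cardinality constraint. Write $F(\cntvec)=\sum_v \bigl(\cnt_v\logit_v-\gate\binom{\cnt_v}{2}\bigr)$. Raising $\cnt_v$ from $m$ to $m+1$ changes $F$ by $\logit_v-\gate m$, because $\binom{m+1}{2}-\binom{m}{2}=m$. Hence, for each token $v$,
\begin{equation*}
F(\cntvec)=\sum_v\sum_{m=0}^{\cnt_v-1}\bigl(\logit_v-\gate m\bigr).
\end{equation*}
So $F(\cntvec)$ is the sum of the first $\cnt_v$ entries of the descending sequence $\logit_v,\logit_v-\gate,\logit_v-2\gate,\ldots$, taken over all $v$. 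This is the score-sequence picture described before the statement. Since $\gate\ge 0$, each sequence is non-increasing, so each per-token term is concave in $\cnt_v$.

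\textbf{Step 1: describe the cascade.} Under the hypotheses, all $\numpaths$ paths commit the same position $j$ in this step and none had committed it before. Path $k$ in the cascade therefore sees $\cnt_v$ equal to the number of earlier paths in the same step that chose $v$. It commits $\arg\max_v(\logit_v-\gate\cnt_v)$, which is the largest score among the current heads of the $|\vocab|$ sequences. Then it advances that sequence by one step. After $\numpaths$ paths, the cascade has selected $\numpaths$ scores. Each choice is a current maximum and the sequences are non-increasing, so by the standard exchange argument the selected multiset consists of the $\numpaths$ largest entries of the merged multiset of all sequence entries, with ties broken arbitrarily. One point must be checked: the commit rule only decides \emph{whether} the position is committed, not which token. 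Since the hypothesis says every path commits $j$, the confidence ranking does not interfere.

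\textbf{Step 2: optimality.} Any feasible $\cntvec$ with $\sum_v\cnt_v=\numpaths$ picks some $\numpaths$ entries of the merged multiset, namely prefixes of each sequence. Its value $F(\cntvec)$ is the sum of those entries. That sum is at most the sum of the $\numpaths$ largest entries overall. The cascade's allocation attains this bound, because its picks are prefixes of the sequences and are also the top $\numpaths$ entries. This proves exact maximization. For the water level, let $\mu$ be the $\numpaths$-th largest score. Then every selected score is at least $\mu$ and every unselected score is at most $\mu$, so $\cnt_v=\#\{m\ge 0:\logit_v-\gate m>\mu\}$ plus a tie share. This is the water-filling characterization. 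It also gives the KKT form of concave resource allocation, which I would state as a remark.

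\textbf{Step 3: remaining points.} The main obstacle is ties. Several tokens can have equal current heads, or there can be ties at the level $\mu$, and then the maximizer is not unique. I would handle this by noting that any tie-break still selects a top-$\numpaths$ multiset, so the \emph{value} is optimal. The claim is that the counts are \emph{a} maximizer, and it is consistent with Remark~\ref{prop:canonicity} on order-independence. The final sentence of the statement needs no proof beyond reading Algorithm~\ref{alg:sr}: with path-dependent logits each step is still an argmax of penalized logits, but the decomposition into shared sequences fails. A two-path example with differing logits shows the sum is no longer maximized, which I would include briefly.
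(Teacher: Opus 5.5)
Your proposal is correct and follows essentially the paper's route: both write the objective as the sum of the non-increasing marginal gains $\logit_v - \gate m$, identify the cascade as the greedy rule that always takes the largest current gain, and conclude from optimality of greedy for a separable concave objective under $\sum_v \cnt_v = \numpaths$. The only difference is the optimality certificate. You bound the value of any feasible allocation by the sum of the $\numpaths$ largest entries of the merged score sequences. The paper, citing Fox, instead moves paths one at a time toward the cascade's counts, and each move is weakly improving against the final level $\theta = \max_u(\logit_u - \gate\cnt_u)$. Both arguments are sound.
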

\emph{Reading.} A path switches tokens when another token has a higher penalized score. The allocation need not make every path distinct. The result holds for the count statistic, which charges only committed tokens; the distribution-reading variants are not covered. Its hypothesis is narrow: in the decodes above, all $\numpaths$ paths commit a position at one step at only 4.9\% of positions, fewer still under full canvas, the regime with the largest gain.

\begin{corollary}[Sparsemax form of the fractional allocation]
  \label{cor:sparsemax}
  For $\gate > 0$, allow the counts in
  Proposition~\ref{prop:allocation} to take nonnegative real
  values while retaining $\sum_v \cnt_v = \numpaths$.
  The unique optimum is
  \[
    \cntvec^\star
    = \numpaths\,\mathrm{sparsemax}
      \!\Big(\frac{\logitvec}{\gate\numpaths}\Big),
  \]
  where $\mathrm{sparsemax}$ denotes Euclidean projection onto
  the probability simplex
  $\{\mathbf{p}\ge 0:\sum_v p_v=1\}$
  \citep{martins2016sparsemax}.
  The cascade's integer counts satisfy
  $|\cnt_v-\cnt_v^\star|<1$ for every token $v$
  (Appendix~\ref{app:sparsemax}).
\end{corollary}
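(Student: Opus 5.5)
We prove Corollary~\ref{cor:sparsemax} in two parts: identify the continuous optimum by convex optimality conditions, then compare the cascade's integer counts with it through a common water level.

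\textbf{Continuous optimum.} For real counts we use the relaxation $\binom{\cnt_v}{2}\mapsto \cnt_v(\cnt_v-1)/2$. On the constraint set the linear part $-\cnt_v/2$ sums to the constant $-\numpaths/2$, so the objective equals
\[
F(\cntvec)=\sum_v \cnt_v\logit_v-\tfrac{\gate}{2}\sum_v \cnt_v^2+\tfrac{\gate\numpaths}{2}.
\]
This is strictly concave for $\gate>0$ and is maximized over the scaled simplex $\{\cntvec\ge 0,\ \sum_v\cnt_v=\numpaths\}$. Substituting $\cntvec=\numpaths\mathbf{p}$ and completing the square, maximizing $F$ is the same as minimizing $\|\mathbf{p}-\logitvec/(\gate\numpaths)\|^2$ over the probability simplex. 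That minimizer is $\mathrm{sparsemax}(\logitvec/(\gate\numpaths))$, and strict convexity gives uniqueness. Equivalently, the KKT conditions give
\[
\cnt^\star_v=\max\!\big(0,(\logit_v-\mu)/\gate\big)
\]
for the unique level $\mu$ solving $\sum_v\cnt^\star_v=\numpaths$.

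\textbf{Integer counts.} By Proposition~\ref{prop:allocation}, the cascade takes the $\numpaths$ largest scores among the sequences $\logit_v-m\gate$, $m=0,1,\dots$. So there is a water level $\theta$ with:
\begin{itemize}
\item every selected score at least $\theta$: $\logit_v-(\cnt_v-1)\gate\ge\theta$ whenever $\cnt_v\ge1$;
\item every unselected score at most $\theta$: $\logit_v-\cnt_v\gate\le\theta$.
\end{itemize}
Hence $\cnt_v$ lies between $(\logit_v-\theta)/\gate$ and $(\logit_v-\theta)/\gate+1$, and is zero only when $\logit_v\le\theta$. This gives $\cnt_v=h_v(\theta)+\epsilon_v$ with $h_v(\theta)=\max(0,(\logit_v-\theta)/\gate)$ and $\epsilon_v\in[0,1]$; for $\cnt_v=0$, $\epsilon_v=0$ exactly.

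\textbf{Comparing the levels.} This is the main obstacle. Both allocations sum to $\numpaths$, so
\[
\sum_v h_v(\theta)\le\numpaths=\sum_v h_v(\mu).
\]
Since $\sum_v h_v$ is nonincreasing in its argument, $\theta\ge\mu$, and therefore $h_v(\mu)-h_v(\theta)\in[0,(\theta-\mu)/\gate]$.
\begin{itemize}
\item \emph{Upper bound.} $\cnt_v-\cnt^\star_v\le\epsilon_v\le 1$.
\item \emph{Lower bound.} Summing the identity $\cnt_v-\cnt^\star_v=\epsilon_v-(h_v(\mu)-h_v(\theta))$ over $v$ gives $\sum_v\epsilon_v=\sum_v(h_v(\mu)-h_v(\theta))$. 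So the total shortfall is bounded by the total slack, but I have not shown how to make this per-token. Each $h_v(\mu)-h_v(\theta)$ is at most $(\theta-\mu)/\gate$, and I would aim to show $\theta-\mu<\gate$. For that, I would choose $\theta$ as large as possible: the largest unselected score. Then if $\theta-\mu\ge\gate$, the tokens active at $\mu$ would lose at least one full unit each relative to $h(\mu)$, while $\epsilon_v\le1$ cannot restore the sum $\numpaths$. Making this contradiction precise requires that some token have $\epsilon_v<1$ by the choice of $\theta$, and I would verify it carefully.
\end{itemize}

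\textbf{Strict inequality.} The bound $\epsilon_v\le1$ is tight only under exact score ties at the level. For strictness, I would either take $\theta$ to be the largest unselected score, which forces $\epsilon_v<1$ for every $v$ with $h_v(\theta)>0$, or handle ties separately. I would handle the boundary cases, where $\cnt_v=0$ or $\cnt^\star_v=0$, separately using the clipping at zero.
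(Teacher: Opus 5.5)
Your continuous part is correct and matches the paper's argument. The gap is the per-token lower bound $\cnt_v-\cnt^\star_v>-1$, which you leave open. Your sketched contradiction for $\theta-\mu<\gate$ does not work as stated. A token whose loss is exactly one unit, $h_v(\mu)-h_v(\theta)=1$, is exactly restored by $\epsilon_v=1$, so nothing stops the sum from returning to $\numpaths$. Moreover, for a general water level $\theta-\mu=\gate$ really occurs: take $\numpaths=1$, logits $\gate$ and $0$ (others lower), and $\theta$ the smallest selected score; then $\mu=0$ and $\theta=\gate$.

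The missing step is a componentwise comparison using the inequality you already have, $\cnt_v\le(\logit_v-\theta)/\gate+1$ whenever $\cnt_v\ge1$. If $\theta\ge\mu+\gate$, it gives $\cnt_v\le(\logit_v-\mu)/\gate\le\cnt^\star_v$ for every token (trivially when $\cnt_v=0$), and equal sums force $\cntvec=\cntvec^\star$. Otherwise $\theta-\mu<\gate$, and $\cnt_v-\cnt^\star_v\ge\epsilon_v-(\theta-\mu)/\gate>-1$. No special choice of $\theta$ is needed. The paper gets the same two-sided level bound symmetrically. It centers the rounding with $y_v(t)=\max(0,(\logit_v-t)/\gate+\tfrac12)$, so that $|\cnt_v-y_v(\theta)|\le\tfrac12$ and $\cntvec^\star=y(\theta')$ with $\theta'=\mu+\gate/2$. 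It then shows $|\theta-\theta'|\le\gate/2$, i.e.\ $\mu\le\theta\le\mu+\gate$, because a larger shift would carry every positive $y_v$ past its integer count. The two half-unit errors then add to at most $1$ in both directions at once.

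Your route to strictness also fails under ties: taking $\theta$ as the largest unselected score does not force $\epsilon_v<1$ when $h_v(\theta)>0$. Take $\numpaths=2$, $\logit_1=\gate$, $\logit_2=0$, all other logits below $-\gate$, and let the second path break the tie at score $0$ toward token $1$. Then $\cntvec=(2,0)$, $\theta=0$, $h_1(\theta)=1$ and $\epsilon_1=1$; the bound survives only because $\cntvec^\star=(1.5,0.5)$.

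Strictness follows instead from your own summed identity. Equality $\cnt_v-\cnt^\star_v=1$ needs both $\epsilon_v=1$ and $h_v(\mu)=h_v(\theta)$. The first gives $\cnt_v\ge1$, hence $\logit_v\ge\theta\ge\mu$. Since $h_v$ is strictly decreasing on $(-\infty,\logit_v]$, the second then forces $\theta=\mu$. But then $\sum_u\epsilon_u=\sum_u\big(h_u(\mu)-h_u(\theta)\big)=0$ makes every $\epsilon_u$ vanish, contradicting $\epsilon_v=1$. This is the content of the paper's one-line remark that a level gap of exactly $\gate/2$ forces the two allocations to coincide.

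Separately, your step $\theta\ge\mu$ needs $\sum_v h_v$ to be strictly decreasing where positive, not merely nonincreasing. That holds because $\sum_v h_v(\mu)=\numpaths>0$.
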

\emph{Reading.} The fractional solution is $\cnt_v^\star=\max\{0,(\logit_v-\mu)/\gate\}$, where the threshold $\mu$ is chosen so that the counts sum to $\numpaths$. Tokens at or below $\mu$ receive zero allocation;
above it, allocation is proportional to the excess logit.
The cascade assigns whole paths, within one path per token of this fractional solution. The continuous threshold $\mu$ need not equal the integer cutoff described above.

\paragraph{Tilt.} With randomness, unlike temperature, the tilt raises untaken tokens and lowers those of the largest count, preserving equal-penalty ratios (Figure~\ref{fig:mechanism}, right; Appendix~\ref{app:tilt}). Like any exponential tilt, it is the I-projection of $p$ onto laws with at most its expected peer agreement (Lemma~\ref{lem:iprojection}; \citealp{csiszar1975}).

\subsection{Distinction to other samplers}
\label{sec:method-boundary}

\paragraph{Sources of asymmetry.} For the $\numpaths$ paths to decode differently, the decode must depend on which path is running. At temperature zero, with no randomness, only three parts of the decode can carry that dependence:
\begin{itemize}
  \item the \emph{rule} each path applies; different rules give an ensemble of configurations \citep{lee2025hex}, or an index-ordered penalty breaks the symmetry \citep{lamont2026odd};
  \item the \emph{assignment} of tokens to paths; handing out shared candidates by rank is a search \citep{wang2024cotdecoding};
  \item the \emph{information order}: what each path reads of its peers and when.
\end{itemize}
A \emph{drop-in} sampler, one shared configuration and one additive penalty, forgoes the first two; the cascade realizes the information order by committing the paths in sequence, each reading its predecessors, though not uniquely (Appendix~\ref{sec:results-symmetry}).

\paragraph{Peer-blind boundary.} A \emph{peer-blind} sampler transforms each sample without reading its peers, drawing its randomness independently across samples. A pool can differ from an independent self-consistency pool in only two ways: a different per-sample law, or correlation. Writing $P$ for the joint law, $P_k$ for its marginals and $p$ for one untilted sample, the divergence splits,
\[
  \kl{P}{p^{\otimes \numpaths}} \;=\; \sum_{k} \kl{P_k}{p} \;+\; \kl{P}{\textstyle\bigotimes_k P_k} ,
\]
into a \emph{marginal} term (the samples' own laws) and a \emph{dependence} term (their correlation, a total correlation). A peer-blind sampler leaves the dependence term at zero: a product channel carries a product law to a product law, so its $\numpaths$ draws are i.i.d.\ from one law, and their mean pairwise answer agreement within a pool equals the mean across independent pools, $\E[M_{\mathrm{within}}] = \E[M_{\mathrm{cross}}]$. The split is a statement about the sampled configurations; against a deterministic pool its marginal terms are infinite, and $\temp = 0$ is covered by Proposition~\ref{prop:degeneracy} instead. What a peer-blind sampler can still move is the marginal term, and coupling the draws at fixed marginals raises coverage by at most a bounded amount.

\begin{proposition}[Coupling ceiling on coverage]
  \label{prop:frechet}
  At fixed per-sample marginals, re-coupling raises coverage above its independent value by at most $(1 - 1/\numpaths)^{\numpaths} < e^{-1}$.
\end{proposition}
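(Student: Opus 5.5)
The plan is to reduce coverage to the probability of a union of events and compare the best coupling with the independent one. Fix the prompt and let $A_k$ be the event that sample $k$ carries a correct answer, with $q_k = P_k(A_k)$ fixed by the marginals. Coverage is $P(\bigcup_k A_k)$. Under independence it equals $1 - \prod_k (1-q_k)$. Under any coupling with these marginals it is at most $\min\{1, \sum_k q_k\}$ by the union bound. This upper bound is attained, which is the Fréchet upper bound: on a common uniform variable, place the events $A_k$ as disjoint intervals of lengths $q_k$, wrapping around the unit interval once the total exceeds $1$. Each marginal law can then be realized by conditioning on that interval, so the maximal coupling exists. The gain from re-coupling is therefore exactly
\[
  \min\Big\{1, \textstyle\sum_k q_k\Big\} - 1 + \textstyle\prod_k (1-q_k).
\]

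The next step removes the dependence on the individual $q_k$. Write $\bar q = \tfrac1\numpaths \sum_k q_k$. By AM--GM, $\prod_k(1-q_k) \le (1-\bar q)^\numpaths$, and the $\min$ term depends only on $\bar q$. Hence the gain is at most $f(\bar q) = \min\{1,\numpaths\bar q\} - 1 + (1-\bar q)^\numpaths$. In the peer-blind setting the marginals are identical, so this step is an equality there, but the bound holds in general.

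It remains to maximize $f$ over $\bar q \in [0,1]$, splitting at $\bar q = 1/\numpaths$.
\begin{itemize}
  \item On $[0,1/\numpaths]$, $f'(\bar q) = \numpaths - \numpaths(1-\bar q)^{\numpaths-1} \ge 0$, so $f$ is nondecreasing.
  \item On $[1/\numpaths,1]$, $f(\bar q) = (1-\bar q)^\numpaths$, which is decreasing.
\end{itemize}
The maximum therefore sits at $\bar q = 1/\numpaths$, where $f = (1-1/\numpaths)^\numpaths$. The strict inequality $(1-1/\numpaths)^\numpaths < e^{-1}$ follows from $\log(1-x) < -x$ for $x \in (0,1)$.

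The main obstacle is stating precisely what "re-coupling at fixed marginals" ranges over and confirming that the union bound is attained, so that the ceiling is exact rather than merely an upper bound. Correctness is a measurable event of each sample, which may admit several correct answers, so only the numbers $q_k$ matter. The construction in the first paragraph handles attainment. The remaining care lies in making sure the AM--GM reduction does not break the case split when the $q_k$ differ. This works because the gain is a function of $\bar q$ after bounding the product term.
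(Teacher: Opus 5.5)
Your proposal is correct and matches the paper's proof essentially step for step. Both bound re-coupled coverage by the Fr\'echet (union-bound) limit $\min(1,\sum_k q_k)$, compare it with the independent value $1-\prod_k(1-q_k)$, apply AM--GM, and split at $\sum_k q_k = 1$ to place the maximum at $q_k \equiv 1/\numpaths$; the paper parametrizes by the sum $s$ where you use the mean $\bar q$, and it cites the Fr\'echet--Hoeffding bounds where you construct the attaining coupling explicitly, which the stated ceiling does not require.
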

\emph{Reading.} Correlating the draws without changing their laws adds at most $0.3487$ coverage per problem at $\numpaths = 10$ (Appendix~\ref{app:frechet}), so any gain above that ceiling must change the laws. Temperature, the truncation family and any self-computed penalty ($\divfn = p$) are peer-blind, their pools i.i.d.; Section~\ref{app:abl-conditionality} tests whether a self-computed penalty tracks temperature.

\section{Experiments}
\label{sec:experiments}

We test whether peer commitments can diversify deterministic decoding, how repulsion compares with increasing temperature, and whether the resulting pools improve plurality accuracy. We compare with self-consistency and reproduced diversity methods at matched numbers of model evaluations, and use controls to examine the role of peer information. Additional results are in Appendix~\ref{app:further}.

\paragraph{Metrics.} \emph{Plurality accuracy} scores the most frequent parsed answer, averaging uniformly over tied answers. \emph{Pass@$k$} measures whether a subset of $k$ paths contains a correct answer; pass@1 is per-sample accuracy and pass@10 is coverage. \emph{Disagreement} is the rate at which two paths return different parsed answers. Metrics are computed per problem and then averaged (Appendix~\ref{app:metrics}); scores and differences are reported as percentages and percentage points, respectively.

\paragraph{Setup.}
We evaluate on GSM8K \citep{cobbe2021training}, MATH \citep{hendrycks2021math} and TruthfulQA \citep{lin2022truthfulqa}. All ensemble methods use LLaDA-8B-Instruct in BF16 with $\numpaths = 10$ paths and generation length 256. Appendix~\ref{app:llada15} adds a second checkpoint, LLaDA-1.5 \citep{zhu2025llada15}, with similar GSM8K gains over self-consistency (coverage $+6.75$, plurality $+2.46$). Unless specified otherwise, \methodname uses the count statistic with strength $\dose = 64$ (gate $\gate = 7.11$; recommended settings in Table~\ref{tab:abl-defaults}). We compare pure diffusion, which treats the response as one block, with semi-autoregressive blocks of 32, reported to improve LLaDA-Instruct on GSM8K and MATH \citep{nie2025llada}. Strength was selected on 200 problems with one seed; the penalty scope was fixed beforehand to the first three-quarters of each block's steps ($\rho = 0.75$; Appendix~\ref{app:ablations}). Sampled configurations use $\temp = 0.6$ unless a temperature sweep is stated. The released decoder instead uses greedy decoding with length and step count 512 \citep{nie2025llada}. Table~\ref{tab:nfe} uses 128 denoising steps per path (1280 evaluations in total); the temperature sweeps, Table~\ref{tab:argmax} and Table~\ref{tab:main} use 256 steps (2560 evaluations). These budgets count model evaluations and exclude sampler-specific overhead.

\paragraph{Baselines.}
TAPS perturbs prompt embeddings with annealed noise \citep{wu2026taps}; ODD diversifies samples through a penalty on their pooled features \citep{lamont2026odd}. We hold the model, prompt, length, schedule, path count, step count and aggregation fixed, and vary the sampler. This protocol excludes the schedule ensemble of \citet{lee2025hex}, which varies decoding configurations. All baseline numbers are our reproductions. We also vary each family's temperature (Appendix~\ref{sec:results-tuned}); Appendix~\ref{app:details} documents the configurations and deviations from released code.

\subsection{Deterministic decoding}
\label{sec:results-tempzero}
With the penalty disabled, the ten paths at temperature zero are identical. On GSM8K with 256 steps and blocks of 32, they reproduce the greedy decoder's 73.00 accuracy. Enabling the count penalty at gate 8 ($\dose = 72$) raises coverage to 92.33 and plurality accuracy to 78.60 without token-sampling noise (Table~\ref{tab:argmax}; an example decode in Figure~\ref{fig:latticepair}). This is 0.33 points below the sampled \methodname configuration's five-seed mean of 78.93 (Table~\ref{tab:abl-seeds}). The 128-step results in Table~\ref{tab:nfe} show the same qualitative effect.

\subsection{Temperature and repulsion}
\label{sec:results-mechanism}

Under pure diffusion with 256 steps, we sweep temperature with repulsion disabled, then sweep penalty strength at $\temp = 0.6$. Table~\ref{tab:grids} compares the resulting trade-offs among disagreement, per-sample accuracy, coverage and plurality accuracy. These runs predate the end-of-sequence correction used in Table~\ref{tab:nfe} (Appendix~\ref{app:eosfix}).

\begin{table}[ht]
  \centering
  \scriptsize
  \caption{The two dials swept over their usable range: GSM8K under pure diffusion, $\numpaths{=}10$, 2560 evaluations, $n{=}600$ over two seeds. Panel B's $\dose{=}0$ row is Panel A's $\temp{=}0.6$ row. Higher is better in the accuracy columns; disagreement is a rate, not an objective. Bold marks temperature's best rung on each metric; $\star$ our deployed setting ($\dose = 64$, gate $\gate = 7.11$), which exceeds temperature's best on plurality and coverage (Panel B unmarked; Appendix~\ref{app:grids}).}
  \label{tab:grids}
  \setlength{\tabcolsep}{5pt}
  \renewcommand{\arraystretch}{0.94}
  \makebox[\linewidth]{\hfill
  \begin{tabular}[t]{r!{\color{black!20}\vrule width 0.5pt}rrrr}
    \toprule
    \multicolumn{5}{c}{\emph{A. Temperature, $\dose = 0$}} \\
    \midrule
    $\temp$ & Disagr. & Per-s. & Plur. & Cover. \\
    \midrule
    0.6 & 15.02 & 53.42 & 54.53 & 63.17 \\
    0.8 & 20.87 & \textbf{54.21} & 56.31 & 66.83 \\
    1.5 & 34.86 & 51.95 & 57.83 & 73.08 \\
    2.0 & 50.61 & 47.77 & 60.72 & 79.92 \\
    2.5 & 64.33 & 41.34 & \textbf{62.00} & \textbf{81.25} \\
    3.0 & 88.87 & 16.73 & 35.62 & 62.75 \\
    \bottomrule
  \end{tabular}
  \hspace{3.5em}
  \begin{tabular}[t]{r!{\color{black!20}\vrule width 0.5pt}rrrr}
    \toprule
    \multicolumn{5}{c}{\emph{B. Count penalty, $\temp = 0.6$}} \\
    \midrule
    $\dose$ & Disagr. & Per-s. & Plur. & Cover. \\
    \midrule
    0 & 15.02 & 53.42 & 54.53 & 63.17 \\
    4 & 33.46 & 54.48 & 60.78 & 77.17 \\
    16 & 42.63 & 56.22 & 69.00 & 85.50 \\
    32 & 43.87 & 57.31 & 71.65 & 87.75 \\
    $\star$\,64 & 46.60 & 56.19 & 73.09 & 86.83 \\
    128 & 47.80 & 55.36 & 71.66 & 88.33 \\
    384 & 53.46 & 53.30 & 73.10 & 90.00 \\
    \bottomrule
  \end{tabular}\hfill}
\end{table}

\looseness=-1 Tuning the baseline's own dial does not reach the penalty. The deployed setting exceeds every temperature rung on plurality and coverage at once, 73.09 and 86.83 against column bests of 62.00 and 81.25 (Table~\ref{tab:grids}). Temperature pays for its diversity in per-sample accuracy, which falls across the sweep and drags the vote down past $\temp = 2.5$; the penalty's holds between 53.30 and 57.31. At matched disagreement the per-sample gap plateaus near $+7$ from $\dose = 16$ (deployed $+7.36$), a property of the dial, not one tuned point. The peer term produces it: a control reading the sample's own distribution returns to the temperature curve (Section~\ref{app:abl-conditionality}). In blocks the contrast is $+0.52$.

\subsection{Accuracy at matched compute}
\label{sec:results-main}
\begin{table}[ht]
  \centering
  \scriptsize
  \caption{Every method at 1280 network function evaluations; Table~\ref{tab:main} repeats it at 2560. GSM8K $n{=}600$, MATH $n{=}492$, TruthfulQA $n{=}684$, two to five seeds; TruthfulQA coverage omitted (94.37 chance). Rows run at $\temp{=}0.6$ except the three $\temp{=}0$ rows: row 1 is the greedy decode, equal to self-consistency (Proposition~\ref{prop:degeneracy}), and the two \methodname, count rows use gate 8 ($\dose = 72$). Pure-diffusion rows carry the end-of-sequence correction (Appendix~\ref{app:eosfix}); TAPS ($\dagger$) is uncorrected. Bold: best per column within a regime, disagreement and the $\numpaths{=}1$ greedy row excluded.}
  \label{tab:nfe}
  \setlength{\tabcolsep}{0.9pt}
  \renewcommand{\arraystretch}{0.8}
  \begin{tabular*}{\linewidth}{@{\extracolsep{\fill}}l!{\color{black!20}\vrule width 0.5pt}ccc!{\color{black!20}\vrule width 0.5pt}cc!{\color{black!20}\vrule width 0.5pt}ccc!{\color{black!20}\vrule width 0.5pt}ccc@{}}
    \toprule
    & \multicolumn{3}{c}{Plurality ($\uparrow$)} & \multicolumn{2}{c}{Coverage ($\uparrow$)} & \multicolumn{3}{c}{Per-sample ($\uparrow$)} & \multicolumn{3}{c}{Disagreement} \\
    \cmidrule(lr){2-4} \cmidrule(lr){5-6} \cmidrule(lr){7-9} \cmidrule(lr){10-12}
    Method & \scriptsize GSM & \scriptsize MATH & \scriptsize TQA & \scriptsize GSM & \scriptsize MATH & \scriptsize GSM & \scriptsize MATH & \scriptsize TQA & \scriptsize GSM & \scriptsize MATH & \scriptsize TQA \\
    \midrule
    \multicolumn{12}{l}{\emph{Semi-autoregressive blocks of 32}} \\
    Self-consistency, $\tau{=}0$ ($=$ greedy, $\numpaths{=}1$) & 70.17 & 28.25 & 55.26 & 70.17 & 28.25 & 70.17 & 28.25 & 55.26 & 0.00 & 0.00 & 0.00 \\
    \cmidrule(l{2pt}r{2pt}){1-12}
    Self-consistency & 77.05 & 40.14 & 59.02 & 89.70 & 56.81 & \textbf{70.04} & \textbf{30.02} & \textbf{54.44} & 27.14 & 62.01 & 34.09 \\
    TAPS & 77.69 & 39.47 & 58.27 & 91.00 & 57.32 & 66.77 & 29.26 & 53.53 & 35.51 & 64.47 & 36.63 \\
    ODD ($\alpha{=}64$) & 77.24 & 37.03 & 60.32 & 91.75 & 54.27 & 64.94 & 24.19 & 52.35 & 38.72 & 70.98 & 40.97 \\
    ODD ($\alpha{=}256$) & 78.00 & 37.75 & 59.99 & 92.17 & 54.37 & 66.57 & 22.77 & 49.85 & 36.47 & 68.12 & 42.88 \\
    \cmidrule(l{2pt}r{2pt}){1-12}
    \methodname, count & 79.11 & 40.04 & 59.77 & 93.53 & \textbf{59.86} & 68.21 & 27.67 & 48.07 & 35.53 & 68.82 & 49.88 \\
    \methodname, count, $\tau{=}0$ & 80.38 & \textbf{40.86} & 59.84 & 94.33 & 57.11 & 69.33 & 27.13 & 46.92 & 34.35 & 69.13 & 51.61 \\
    \methodname, exp.\ count & \textbf{80.65} & 39.58 & 59.82 & \textbf{94.67} & 59.65 & 66.47 & 26.78 & 50.75 & 40.60 & 71.45 & 43.66 \\
    \methodname, collision-wt.\ & 80.07 & 38.81 & \textbf{60.60} & 94.42 & 55.39 & 61.87 & 24.44 & 52.78 & 48.63 & 75.08 & 40.53 \\
    \midrule[0.12em]
    \multicolumn{12}{l}{\emph{Pure diffusion (whole response as one block)}} \\
    Self-consistency & 57.02 & 24.00 & 56.27 & 67.00 & 37.40 & 54.47 & 22.87 & 54.17 & 20.40 & 40.89 & 24.81 \\
    TAPS$^\dagger$ & 56.82 & 25.67 & 55.14 & 69.75 & 39.74 & 50.34 & 22.46 & \textbf{55.26} & 32.74 & 45.38 & 14.96 \\
    \cmidrule(l{2pt}r{2pt}){1-12}
    \methodname, count & 71.53 & 29.79 & 58.54 & 87.75 & 50.51 & 53.85 & 22.62 & 53.79 & 49.73 & 70.26 & 36.91 \\
    \methodname, count, $\tau{=}0$ & 70.98 & 31.27 & \textbf{59.70} & 87.17 & 50.41 & 53.82 & 23.17 & 52.85 & 49.50 & 68.40 & 37.57 \\
    \methodname, exp.\ count & 72.01 & 29.40 & 55.65 & \textbf{90.33} & 48.78 & 53.87 & 20.47 & 50.78 & 53.23 & 74.97 & 37.01 \\
    \methodname, collision-wt.\ & \textbf{76.27} & \textbf{35.37} & 56.99 & 90.00 & \textbf{55.79} & \textbf{59.81} & \textbf{25.10} & 50.50 & 45.93 & 72.40 & 41.57 \\
    \bottomrule
  \end{tabular*}
\end{table}

\begin{figure}[t]
  \centering
  \includegraphics[width=0.92\linewidth]{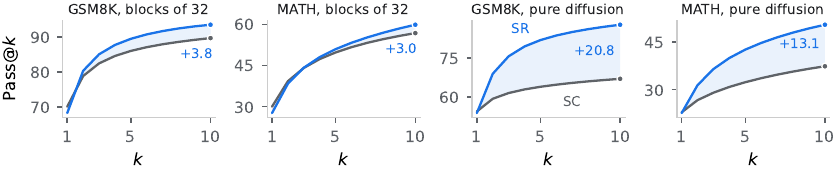}
  \caption{Pass@$k$, self-consistency against \methodname (count statistic), Table~\ref{tab:nfe}'s campaigns; shading is the gap, the printed number its size at $k{=}10$. \methodname is below self-consistency only at $k \le 2$, the per-sample end of the curve, and above it at every larger $k$, including MATH; all methods and values in Table~\ref{tab:passkmain}.}
  \label{fig:passkmain}
\end{figure}

\looseness=-1 At its best variant per column, \methodname tops all six plurality columns of Table~\ref{tab:nfe}, over the strongest baseline by 2.65, 0.72 and 0.28 in blocks; a different variant leads each benchmark. The deployed count variant itself trails self-consistency on MATH in blocks by $0.10$; on GSM8K the strongest baseline is the peer-conditioned ODD, which the temperature-zero count variant alone exceeds ($80.38$ against $78.00$, 128 steps). TruthfulQA is the negative case, nothing to convert at a 94.37 chance ceiling. On MATH the pool improves where no aggregator we built converts it, pass@$k$ reaching $+3.05$ at $k = 10$ (Figure~\ref{fig:passkmain}); the gain is a verifier's, not the vote's (Appendices~\ref{sec:results-decomposition} and~\ref{app:covpred}).

\looseness=-2 Under pure diffusion, where duplication is worst (Appendix~\ref{app:abl-blocks}), self-consistency's GSM8K vote falls to 57.02 against the collision-weighted penalty's 76.27 (Table~\ref{tab:nfe}); TAPS sits at 56.82 and blocking \citep{arriola2025block,nie2025llada} recovers most of the gap, at 77.05.

\subsection{Peer-versus-own-distribution control}
\label{app:abl-conditionality}
\looseness=-1 Reading the peers, not the disagreement the penalty produces, is what keeps per-sample accuracy above the temperature curve. A matched control isolates the two: with everything else fixed, each path reads its own predictive distribution in place of the peers' commitments. At matched disagreement this own-distribution control sits $+0.20$ and $+1.26$ above the temperature sweep, against $+7.35$ and $+12.71$ for the peer statistic (Table~\ref{tab:ladder}).

\begin{table}[h]
  \centering
  \footnotesize
  \caption{Temperature against \methodnamelong under pure diffusion, on the 1200 problem-seed pairs common to all nine configurations. The varied parameter is temperature for the first family and strength $\dose$ for the other two (gate $\gate = \dose/(\numpaths-1)$). The last two columns interpolate the temperature sweep to each configuration's own disagreement and give the per-sample accuracy it reaches there and the gap to it.}
  \label{tab:ladder}
  \setlength{\tabcolsep}{5pt}
  \begin{tabular}{llrrrrrr}
    \toprule
    Conditions on & Varies & Disagr. & Per-sample & Plur. & Cov. & Temp.\ here & Gap \\
    \midrule
    Nothing (temperature)   & $\temp{=}0.6$ & 15.02 & 53.42 & 54.53 & 63.17 & --    & --      \\
    Nothing (temperature)   & $\temp{=}1.0$ & 25.06 & 53.76 & 57.44 & 68.83 & --    & --      \\
    Nothing (temperature)   & $\temp{=}1.2$ & 28.25 & 53.24 & 57.82 & 71.33 & --    & --      \\
    Nothing (temperature)   & $\temp{=}1.5$ & 34.86 & 51.95 & 57.83 & 73.08 & --    & --      \\
    Nothing (temperature)   & $\temp{=}2.0$ & 50.61 & 47.77 & 60.72 & 79.92 & --    & --      \\
    \addlinespace
    Own distribution        & $\dose{=}1$   & 24.48 & 53.94 & 57.57 & 69.33 & 53.74 & $+0.20$ \\
    Own distribution        & $\dose{=}2$   & 32.63 & 53.64 & 59.72 & 74.17 & 52.39 & $+1.26$ \\
    \addlinespace
    Peer comm., count       & $\dose{=}64$  & 46.60 & 56.19 & 73.09 & 86.83 & 48.84 & $+7.35$ \\
    Peer comm., coll.-wt.\  & $\dose{=}128$ & 40.91 & 63.06 & 76.85 & 91.25 & 50.34 & $+12.71$ \\
    \bottomrule
  \end{tabular}
\end{table}

\section{Related work}
\label{sec:related}

\looseness=-1 No common mechanism for diversifying a model's $\numpaths$ samples lets a sample read, mutually and within the step, its peers' committed tokens at the same position. One family reshapes each sample's own distribution, by temperature, truncation, prompt-embedding perturbation \citep{wu2026taps}, tempered remasking \citep{olausson2026twotemps}, or within-sequence kernel-entropy guidance \citep{zhang2026sake}; the draws stay independent. A second couples the pool through shared randomness: arithmetic codes with exact marginals \citep[Prop.~3]{vilnis2023arithmetic} or quasi-Monte Carlo, under a coverage ceiling on marginal-preserving samplers \citep[Thm.~2]{li2026quasimotto}, and sampling without replacement, which changes the sampling probabilities \citep{kool2019stochastic}. A third votes over varied decoding configurations, ensembling block schedules to avoid the failure modes of any single one \citep{lee2025hex}; a fourth reads cross-sample agreement with the opposite sign, keeping agreed positions and resampling the rest \citep{feng2026dvoting}.

\looseness=-1 In beam search, Diverse Beam Search subtracts a Hamming penalty across ordered beam groups \citep{vijayakumar2018diverse} and determinantal beam search maximizes a log-determinant over a similarity kernel \citep{meister2021determinantal}; \methodname instead couples $\numpaths$ samplers through the tokens they commit at each position as they draw. CoT-decoding builds a deterministic autoregressive ensemble \citep{wang2024cotdecoding}, assigning the top-$\numpaths$ first-step tokens to paths and continuing each greedily, so no path reads another. Sequential variants repel each sample from its predecessors, by an entailment score or branch avoidance \citep{park2025diversitysteered,park2026uag}, none voting over the pool; SemDiD instead runs its groups concurrently, repelling each group after the first, greedy one from its most similar other group in embedding space \citep{shi2025semdid}. The closest prior work is ODD \citep{lamont2026odd}, which orthogonalizes each sample against its predecessors in a position-pooled confidence space, one gradient step per denoising step, its feature carrying no position index and its target $\passat@k$. ODD too diversifies at temperature zero, where each sample diverges only from its predecessors; \methodname instead keeps one symmetric penalty across paths, drawing its asymmetry from the information order alone. Kernel-gradient repulsion between concurrent members is established outside text \citep{liu2016svgd,dangelo2021repulsive}, and particle guidance does so in continuous diffusion through a fixed or learned potential \citep{corso2024particle}. Self-repellent random walks \citep{doshi2023srrw} and Stein self-repulsive dynamics \citep{ye2020steinselfrepulsive} carry the name with a different construction, repelling one process from its own past, as does contrastive search, one sequence penalized by its own context \citep{su2022contrastive}.

\looseness=-1 Coupling across samples also appears with the opposite sign: dVoting inherits agreed tokens \citep{feng2026dvoting}, ThinkMerge averages $\numpaths$ traces into one \citep{wang2025thinkmerge}, and others resample toward the trajectory a scorer prefers \citep{dang2025pgdlm,luo2026smc}; D5P4 instead repels by selection: per diffusion step it keeps mutually dissimilar candidates, drawn from the model's denoising logits, under a partitioned determinantal point process \citep{lys2026d5p4,kulesza2012dpp}. A separate line measures the limits of pooling answers without acting on the sampler: vote accuracy can rise then fall in the call count \citep{chen2024more}, one analysis names decorrelation a lever though every intervention it names is peer-blind \citep{bay2026sampling}, and models that err tend to agree on the wrong answer \citep{kim2025correlated}. No label-independent diversity effect exists for the 0-1 loss under any combiner \citep[Thm.~10]{wood2023unified}, and the classical diversity split is additive with no coverage channel \citep{brown2010good}.

\section{Conclusion}
\label{sec:conclusion}

\methodnamelong uses peer commitments to diversify masked-diffusion ensembles without additional model evaluations. Updating these commitments within each denoising step lets paths diverge even at temperature zero. On GSM8K, the count penalty improves plurality over self-consistency by $2.06$ points in blocks of 32 at matched budgets; ten deterministic paths reach $80.38\%$, against $70.17\%$ for the unpenalized greedy decoder (128 steps). The analyses identify higher coverage as the main source of voting gains, while showing that better coverage does not always improve the vote. \textbf{Limitations:} The evidence is limited to the LLaDA family: the main experiments use LLaDA-8B-Instruct, and LLaDA-1.5 covers GSM8K only. The exact allocation result applies locally under shared logits, not to complete trajectories. At fixed penalty strength, plurality changes little beyond $\numpaths = 10$ even as coverage rises (Appendix~\ref{app:abl-k}); selecting the correct answer from the larger pool remains a limitation.

\clearpage
\bibliography{iclr2027_conference}
\bibliographystyle{iclr2027_conference}

\clearpage
\appendix
\numberwithin{proposition}{section}
\section{Proofs and Theoretical Details}
\label{app:proofs}

\subsection{Ensembles without randomness}

\begin{proof}[Proof of Proposition~\ref{prop:degeneracy}]
Write the canvases at the start of a step as $u = (u_1, \dots, u_{\numpaths})$ and the step itself as $u_k \mapsto G(u_k, u_{-k})$, one function $G$ shared by all paths, where $u_{-k}$ is the tuple of path $k$'s peers in a fixed indexing. A statistic that reads the peers as a multiset, or all $\numpaths$ canvases symmetrically, is the special case where $G$ is symmetric in $u_{-k}$.

\emph{One step preserves equality.} Suppose the $u_k$ all equal one canvas $w$. Every path then evaluates $G$ at the same argument $(w, (w, \dots, w))$, and each part of the step returns the same result across paths. The model reads the canvas alone and returns the same logits. The penalty is a function of the same argument and is the same vector. The argmax under a path-index-free tie-break writes the same token. The commit rule, one path-index-free function of the canvas, its raw logits and the written tokens, selects the same positions. So the canvases stay equal after the step.

\emph{Induction.} The paths start from a common initial canvas. One step preserves equality, so induction over steps keeps the canvases identical and the pool holds one distinct sample.
\end{proof}

\begin{remark}[Order-invariance of the cascade]
  \label{prop:canonicity}
  At temperature zero, for a processing order held fixed across steps, $\mathcal{C}_\pi = \mathcal{C}_{\pi'}$ for all processing orders $\pi, \pi'$, where $\mathcal{C}_\pi$ is the pool the cascade produces under order $\pi$. By induction over the pairs (denoising step, cascade position), the path at each pair reads its own canvas and a symmetric count of canvases written at earlier pairs, so the trajectory realized at cascade position $i$ does not depend on the order, which only assigns members to indices; at positive temperature the same induction gives equality in law. The deployed cascade lets each path read every peer; a triangular variant reads only predecessors.
\end{remark}

\subsection{Allocation at a shared position}
\label{app:allocation}

\begin{proof}[Proof of Proposition~\ref{prop:allocation}]
\emph{Concave objective.} Under the hypotheses all paths committing the position face the same logits $\logit$. The $i$-th path to take token $v$ gains $\logit_v - \gate\,(i-1)$, which decreases in $i$. A count vector $\cnt$ then realizes total gain
\[
  \sum_v \cnt_v \logit_v - \gate \sum_v \binom{\cnt_v}{2} ,
\]
a sum over tokens of concave functions of the counts. The cascade takes the largest current gain $\logit_v - \gate\, \cnt_v$ at each turn, so it greedily maximizes this objective.

\emph{Greedy exactness.} A separable concave objective on the integer simplex is maximized greedily \citep{fox1966}. The gains the cascade takes are non-increasing, so every gain it takes is at least $\theta = \max_u (\logit_u - \gate\, \cnt_u)$, the largest gain still available at the end. Take any feasible $\cnt'$ and move one path from a token with $\cnt'_v > \cnt_v$, whose last gain is at most $\theta$, to a token with $\cnt'_u < \cnt_u$, whose next gain is at least $\theta$. The move does not lower the objective. Repeating it until $\cnt' = \cnt$ shows $\cnt$ optimal.

\emph{Water level.} The common threshold is $\theta = \max_u (\logit_u - \gate\, \cnt_u)$. For $\gate > 0$, every token receiving a path has
\[
  (\logit_v - \theta)/\gate \;\le\; \cnt_v \;\le\; (\logit_v - \theta)/\gate + 1 ,
\]
while every token receiving none has $\logit_v \le \theta$. The expected-count and collision-weighted statistics charge tokens no peer has committed, so this allocation does not describe them.
\end{proof}

\subsection{Allocation as a sparsemax}
\label{app:sparsemax}

\emph{Fractional maximizer.} On the constraint $\sum_v \cnt_v = \numpaths$ we have $\sum_v \binom{\cnt_v}{2} = \tfrac12\|\cnt\|^2 - \tfrac{\numpaths}{2}$, so the objective of Proposition~\ref{prop:allocation} for the count statistic is
\[
  -\tfrac{\gate}{2}\,\|\cnt - \logit/\gate\|^2
\]
plus terms independent of $\cnt$. Over fractional allocations its maximizer is the Euclidean projection of $\logit/\gate$ onto the scaled simplex. After the substitution $\cnt = \numpaths p$ this projection is $\numpaths\,\mathrm{sparsemax}(\logit/\gate\numpaths)$ \citep{martins2016sparsemax}, with $\gate\numpaths = \dose\numpaths/(\numpaths-1)$ as the sparsemax temperature.

\emph{Integer counts.} Let $\theta = \max_u(\logit_u - \gate\cnt_u)$ be the cascade's final level and set $y_v(t) = \max(0,\ (\logit_v - t)/\gate + \tfrac12)$. The end-state conditions of the proof above read $|\cnt_v - y_v(\theta)| \le \tfrac12$ for every $v$, while the projection is $y_v(\theta')$ at the level $\theta'$ where $\sum_v y_v(\theta') = \numpaths$. Every positive $y_v$ has slope $-1/\gate$ in $t$, so a level difference above $\gate/2$ would carry $\sum_v y_v$ strictly past $\numpaths$. Hence $|\theta - \theta'| \le \gate/2$, and
\[
  \max_v\, \big|\cnt_v - \numpaths\,\mathrm{sparsemax}_v(\logit/\gate\numpaths)\big| \;<\; 1 ,
\]
the equality excluded because $|\theta - \theta'| = \gate/2$ forces the two allocations to coincide. Figure~\ref{fig:cascade-simplex} shows the six commits as a walk on the probability simplex toward the sparsemax point.

\begin{figure}[t]
  \centering
  \includegraphics[width=\linewidth]{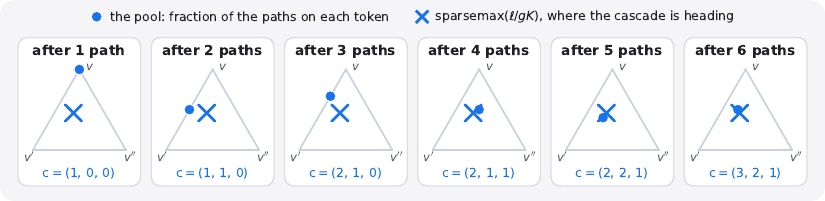}
  \caption{One position on the probability simplex, three of its tokens. Each frame shows the pool after $i$ paths, the fraction of the paths on each token, as a point of the triangle, against $\mathrm{sparsemax}(\logit/\gate\numpaths)$, marked by a cross. The cascade starts the pool at the mode's vertex and walks it, one path at a time, onto the cross: after six paths the pool is $(3,2,1)/6$, the nearest point of the $\numpaths$-grid to $\mathrm{sparsemax}(\logit/\gate\numpaths) = (0.46, 0.34, 0.21)$ (Proposition~\ref{prop:allocation}).}
  \label{fig:cascade-simplex}
\end{figure}

\begin{figure}[t]
  \centering
  \includegraphics[width=\linewidth]{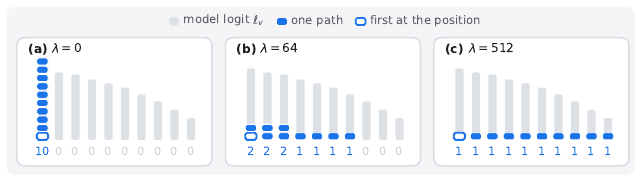}
  \caption{What the $\numpaths$ paths commit at one position at $\temp = 0$: with the penalty off \textbf{(a)}, at the deployed strength \textbf{(b)} and at a large one \textbf{(c)}. Each grey bar is a token's model logit $\logit_v$ and each blue cell is one path committed there, stacked in commit order; the outlined cell is the first path to commit, which faces no peer and pays no penalty. With the penalty off all $\numpaths$ paths tower on the mode; raising it spreads them, until at large strength each of the top $\numpaths$ tokens holds one path. Logits are illustrative and the allocation is computed by the update rule; the water level is the common threshold a token's logit must clear to receive a path.}
  \label{fig:waterlevel}
\end{figure}

\subsection{Tilt}
\label{app:tilt}

Subtracting $\efftilt \divfn_v$ from the log-probabilities multiplies $p(v)$ by $e^{-\efftilt \divfn_v}$ and renormalizes. Where the mass goes is immediate. The ratio is
\[
  \frac{q(v)}{p(v)} \;=\; \frac{e^{-\efftilt \divfn_v}}{\partf} , \qquad \partf = \E_p\big[e^{-\efftilt \divfn}\big] ,
\]
an average of the factors over $\mathrm{supp}\, p$, so for $\efftilt > 0$ a token gains mass, $q(v) \ge p(v)$, exactly when $\divfn_v \le -\log \partf / \efftilt$. For the count statistic under a full-support $p$, once some peer has committed and the support is larger than $\numpaths - 1$, $\min \divfn = 0 < \max \divfn$ and $\partf$ lies strictly between the extreme factors. So every untaken token gains mass and every token of the largest count loses it, the movement Figure~\ref{fig:mechanism} draws against temperature, which reshapes every token instead.

\paragraph{Ratio preservation.}
If $\divfn_u = \divfn_v$ the tilt factors cancel in the ratio $q(u)/q(v)$, which stays $p(u)/p(v)$ at every $\dose$. Temperature instead raises every ratio to the power $\temp/\temp'$ and so distorts all but the unit ratio.

\subsection{Minimum-divergence characterization of the tilt}

\begin{lemma}[I-projection]
  \label{lem:iprojection}
  For $\efftilt \ge 0$ and $m = \E_q[\divfn]$, the law in \eqref{eq:tilt} is the unique solution of $\min\{\kl{q'}{p} : \E_{q'}[\divfn] \le m\}$.
\end{lemma}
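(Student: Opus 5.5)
The plan is the standard Gibbs-variational argument: evaluate $\kl{q'}{p}$ relative to the tilted law $q$ and read off both optimality and uniqueness from one identity. Work on a finite vocabulary $\vocab$ and write $q(v) = p(v)e^{-\efftilt\divfn_v}/\partf$ with $\partf = \E_p[e^{-\efftilt\divfn}] > 0$. First dispose of feasible $q'$ that are not absolutely continuous with respect to $p$: for them $\kl{q'}{p} = \infty$, so they cannot be minimizers, and $q$ itself has finite divergence. The constraint set $\{q' : \E_{q'}[\divfn] \le m\}$ is nonempty because $q$ attains $m$ with equality.

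The key step is the following identity, valid for any $q' \ll p$, and obtained by writing $\log(q'/p) = \log(q'/q) + \log(q/p)$ and substituting $\log(q(v)/p(v)) = -\efftilt\divfn_v - \log\partf$:
\[
  \kl{q'}{p} \;=\; \kl{q'}{q} \;-\; \efftilt\,\E_{q'}[\divfn] \;-\; \log \partf .
\]
Setting $q' = q$ gives $\kl{q}{p} = -\efftilt m - \log\partf$. Subtracting the two expressions gives
\[
  \kl{q'}{p} - \kl{q}{p} \;=\; \kl{q'}{q} \;+\; \efftilt\big(m - \E_{q'}[\divfn]\big).
\]
For feasible $q'$ both terms on the right are non-negative, because $\efftilt \ge 0$ and $\E_{q'}[\divfn] \le m$. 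Hence $q$ is a minimizer.

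For uniqueness, suppose $q'$ is also a minimizer. Then both non-negative terms vanish, and in particular $\kl{q'}{q} = 0$, so $q' = q$ by Gibbs' inequality. Since $\divfn$ takes finitely many values on the finite vocabulary, all expectations are finite and no limiting arguments are needed.

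The only delicate point is the degenerate case $\efftilt = 0$. There $q = p$ and $m = \E_p[\divfn]$, and the argument still holds because the $\kl{q'}{q}$ term alone forces $q' = p$. The inequality constraint, rather than an equality constraint, is justified precisely by the sign of $\efftilt$: the multiplier term enters with the correct sign only when $\efftilt \ge 0$. Apart from this check, I do not expect a real obstacle.
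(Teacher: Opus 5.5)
Your proposal is correct and follows the paper's proof essentially line for line. The paper uses the same identity $\kl{q'}{q} = \kl{q'}{p} + \efftilt\,\E_{q'}[\divfn] + \log\partf$, bounds $\kl{q'}{p} \ge -\log\partf - \efftilt m = \kl{q}{p}$ using $\efftilt \ge 0$ and feasibility, and obtains uniqueness from $\kl{q'}{q} = 0$; your explicit treatment of $q' \not\ll p$ and of $\efftilt = 0$ spells out points the paper leaves implicit.
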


\begin{proof}[Proof of Lemma~\ref{lem:iprojection}]
For any law $q'$ on $\vocab$, expand $q$ inside the divergence:
\[
  \kl{q'}{q} \;=\; \sum_v q'(v) \log \frac{q'(v)}{p(v)\,e^{-\efftilt \divfn_v}/\partf} \;=\; \kl{q'}{p} \;+\; \efftilt\,\E_{q'}[\divfn] \;+\; \log \partf .
\]
Rearranged, this reads $\kl{q'}{p} + \efftilt\,\E_{q'}[\divfn] \ge -\log \partf$, with equality if and only if $q' = q$. The tilt $q$ is feasible, since $\E_q[\divfn] = m$. For any feasible $q'$, using $\efftilt \ge 0$ at the second step,
\[
  \kl{q'}{p} \;\ge\; -\log \partf - \efftilt\,\E_{q'}[\divfn] \;\ge\; -\log \partf - \efftilt m \;=\; \kl{q}{p} ,
\]
the last equality being the line above evaluated at $q' = q$. Equality throughout forces $\kl{q'}{q} = 0$, hence $q' = q$. The minimizer therefore exists and is unique.
\end{proof}

The constraint is one-sided because $\efftilt \ge 0$: the tilt is closest to $p$ among laws with expected agreement \emph{at most} $m$, a negative strength bounding the agreement below instead. Fixing $\E[\divfn]$ exactly leaves the same minimizer, the I-projection onto a linear family.

\subsection{Coupling ceiling on coverage}
\label{app:frechet}

\begin{proof}[Proof of Proposition~\ref{prop:frechet}]
Fix the per-sample marginals $p_1, \dots, p_{\numpaths}$ for the event that a sample produces the correct answer. Coverage over all couplings lies in $[\max_k p_k,\ \min(1, \sum_k p_k)]$, the Fr\'echet--Hoeffding bounds \citep[Thms.~2.10.12--13]{nelsen2006copulas} applied to the complement events, while independence gives $1 - \prod_k (1 - p_k)$. The excess of the upper bound over independence is
\[
  G(p) \;=\; \min\Big(1, \textstyle\sum_k p_k\Big) \;-\; \Big[1 - \prod_k (1 - p_k)\Big] .
\]
Write $s = \sum_k p_k$. For $s \le 1$ the arithmetic-geometric mean inequality gives $G \le s - 1 + (1 - s/\numpaths)^{\numpaths}$, whose derivative $1 - (1 - s/\numpaths)^{\numpaths - 1}$ is non-negative, so the maximum is at $s = 1$. For $s > 1$, $G = \prod_k (1 - p_k) \le (1 - s/\numpaths)^{\numpaths} < (1 - 1/\numpaths)^{\numpaths}$. Either way $G \le (1 - 1/\numpaths)^{\numpaths}$, attained only at $p_k \equiv 1/\numpaths$. The value is below $e^{-1}$ for every finite $\numpaths$ and equals $0.3487$ at $\numpaths = 10$; the per-problem excess $(1 - p)^{\numpaths}$ is of order $10^{-3}$ at $p = 0.5$.
\end{proof}

\section{Experimental details}
\label{app:details}

\subsection{Metrics}
\label{app:metrics}
Each metric is computed per problem, averaged over seeds then problems, from the answers $a_1, \dots, a_{\numpaths}$ extracted from the $\numpaths$ samples, where extraction may fail. \emph{Plurality accuracy} scores the vote under a uniform tie-break, $\mathbf{1}[a^\star \in M]/|M|$ for $M$ the top-count set among parsed answers, rather than an order-dependent break. \emph{Coverage} is $\mathbf{1}[\exists k : a_k = a^\star]$, an upper bound on any aggregator over the pool; it saturates on small answer sets (four options give $1 - 0.75^{10} = 94.37\%$ by chance), so we omit it on TruthfulQA, where the count row's coverage sits at that ceiling (Table~\ref{tab:decomp}). \emph{Per-sample accuracy} is the fraction of correct samples, an unparsed sample counted wrong; the unparsed rate reaches 17.4\% on MATH and the two orderings disagree there under pure diffusion, so this counted-wrong denominator, not the parsed-only one, is the one the tables use. \emph{Disagreement} is the rate at which two samples of a pool give different parsed answers, a diversity measure and not an objective.

\subsection{Model and computation}
We run LLaDA-8B-Instruct (\texttt{GSAI-ML/LLaDA-8B-Instruct}) with frozen weights in BF16, $\numpaths = 10$ samples of generation length 256, so a problem costs $\numpaths T$ network function evaluations and no method uses more. Table~\ref{tab:nfe} runs $T = 128$ (1280 evaluations); Tables~\ref{tab:grids}, \ref{tab:main}, \ref{tab:argmax} and~\ref{tab:filtration} run $T = 256$ (2560); the sequential variant spends 1280. The temperature-zero rows of Tables~\ref{tab:nfe} and~\ref{tab:tuned} run at $T = 128$, those of Tables~\ref{tab:argmax} and~\ref{tab:filtration} at $T = 256$. Decoding is semi-autoregressive in blocks of 32 with the released low-confidence remasking, positions ranked by the untilted confidence of the sampled token; the pure-diffusion campaigns instead decode the whole generation as a single block. The pure-diffusion rows of Table~\ref{tab:nfe} carry the released end-of-sequence confidence correction (Appendix~\ref{app:eosfix}), while the 2560-evaluation campaigns of Tables~\ref{tab:grids} and~\ref{tab:main} predate it and every TAPS row cannot take it, as their captions state. Our temperature 0.6 and length 256 deviate from the released greedy decoder, which uses length and step count 512.

\subsection{Selection, seeds and reporting}
GSM8K \citep{cobbe2021training} uses a frozen 600-problem index into the GSM8K test split (\texttt{test.jsonl}, 1319 items), MATH \citep{hendrycks2021math} a 492-problem index into MATH-500 with the eight items whose reference answers do not parse removed, and TruthfulQA \citep{lin2022truthfulqa} a 684-problem index into the four-option items of the multiple-choice set (\texttt{truthful\_qa\_mc}), each item tagged by its SHA-256 prefix in every output. GSM8K is prompted zero-shot with a step-by-step instruction and the last number in the generation is taken as the answer; MATH and TruthfulQA place the final answer in a \texttt{\textbackslash boxed\{\}} field inside a reasoning-then-answer template, extracted as the boxed expression under Hendrycks normalization (MATH) or the boxed option letter (TruthfulQA), each with a bare-answer fallback. Every within-campaign contrast is paired within one decoding run and not pooled across runs, which removes the half-precision kernel noise that compounds over the denoising steps; the few cross-campaign comparisons (the end-of-sequence correction of Appendix~\ref{app:eosfix}, the block-versus-canvas baselines) are marked as such where they appear. Configurations carry two to five seeds as stated per table; temperature-zero configurations consume no randomness and have no seed axis, so a paired sampled configuration carries the seed variance. A reported result is a mean over at least 300 problems and two seeds; configurations at 200 problems or one seed are given in direction only and carry no table. The per-stratum diagnostics of Figure~\ref{fig:covpred} report finer strata, some below this floor, as diagnostics and not results. A two-configuration comparison is a mean paired difference over the shared problems with seeds pooled; Appendix~\ref{app:abl-seeds} gives the single-configuration seed spread.

\subsection{Baselines}
All baseline numbers are our reproductions under this protocol, each family swept over its own temperature, ties broken uniformly. \emph{ODD} and \emph{TAPS} run from their authors' released code at pinned commits.\footnote{ODD (\url{https://github.com/sean-lamont/odd}) and TAPS (\url{https://github.com/Johnny221B/TAPS}), each vendored at a pinned commit.} ODD runs its own strategy and feature extractor inside our loop, at temperature 0.6 in the shared-schedule tables (Tables~\ref{tab:nfe} and~\ref{tab:main}) and at 1.0, a temperature its authors report, in the own-temperature comparison (Table~\ref{tab:tuned}), with step sizes $\alpha \in \{64, 256\}$ from our sweep; TAPS runs its embedding-level conditioning perturbation, cosine-annealed Gaussian noise of scale 0.2 on the prompt embeddings, swept over $\{0.6, 1.0, 1.5\}$. Sampling hygiene is left as each author wrote it, so an unparsed sample is counted wrong rather than repaired.

\section{Ablations}
\label{app:ablations}

We vary the decoding parameters to assess the method's sensitivity. Ensemble size and decoding regime have the largest effects on plurality accuracy; the remaining parameters have smaller effects over the ranges tested. Unless stated otherwise, each sweep varies one parameter while holding the others at the reference configuration of Appendix~\ref{app:details}: ten samples, 128 steps, blocks of 32, $\temp = 0.6$, and the count statistic at strength 64, evaluated on GSM8K ($n = 600$) at 1280 model evaluations. Comparisons use the problems common to every row and seed, with metrics defined in Appendix~\ref{app:metrics}. The ensemble-size sweep (Table~\ref{tab:abl-k}) and the penalty-by-regime comparison (Table~\ref{tab:crossover}) use 256 steps instead. The ensemble-size sweep also includes a fixed-gate comparison at $\gate = 8.00$, alongside the fixed-strength comparison at $\dose = 64$.

\subsection{Variation across decoding seeds}
\label{app:abl-seeds}
We measure variation across five decoding seeds while holding the evaluation problems and decoding configuration fixed. At 1280 evaluations, \methodname has mean plurality accuracy 79.11\%, with a standard deviation of 1.08 percentage points and a range of 2.87 points. At 2560 evaluations, its standard deviation and range are 0.23 and 0.50 points, respectively. Self-consistency at 1280 evaluations has a standard deviation of 0.40 points and a range of 1.04 points. In these runs, reducing the step count from 256 to 128 increases \methodname's standard deviation by a factor of 4.7 and its range by a factor of 5.7. Per-sample accuracy varies less across seeds than plurality accuracy in every configuration.

For the comparison with self-consistency, we report the paired difference for each seed at the same evaluation budget. At 1280 evaluations, these differences are 3.51, 1.68, 1.74, 1.68, and 1.69 percentage points, giving a mean improvement of 2.06 points. The improvement is positive for all five seeds. These summaries describe sensitivity to sampling randomness on the fixed evaluation set; the seed standard deviations and ranges are not significance thresholds.

\begin{table}[h]
  \centering
  \footnotesize
  \caption{Effect of the decode seed. Each configuration is held fixed, block 32, GSM8K, $n=600$; only the seed varies. The reference configuration and self-consistency are at 1280 evaluations; the third configuration is the same method at 2560. SD and range are taken over the five seeds.}
  \label{tab:abl-seeds}
  \setlength{\tabcolsep}{4pt}
  \begin{tabular}{llrrrrrrrr}
    \toprule
    Configuration & Metric & Seed 0 & Seed 1 & Seed 2 & Seed 3 & Seed 4 & Mean & SD & Range \\
    \midrule
    \methodname, 1280        & Plurality   & 80.93 & 78.89 & 78.79 & 78.06 & 78.89 & 79.11 & 1.08 & 2.87 \\
                             & Coverage    & 94.33 & 92.17 & 94.00 & 93.50 & 93.67 & 93.53 & 0.83 & 2.17 \\
                             & Per-sample  & 67.68 & 68.12 & 68.72 & 68.00 & 68.52 & 68.21 & 0.41 & 1.03 \\
    \addlinespace
    Self-consistency, 1280   & Plurality   & 77.42 & 77.21 & 77.06 & 76.38 & 77.20 & 77.05 & 0.40 & 1.04 \\
                             & Coverage    & 89.50 & 89.67 & 90.67 & 89.50 & 89.17 & 89.70 & 0.57 & 1.50 \\
                             & Per-sample  & 70.03 & 69.65 & 70.57 & 70.02 & 69.92 & 70.04 & 0.33 & 0.92 \\
    \addlinespace
    \methodname, 2560        & Plurality   & 78.82 & 79.22 & 78.76 & 78.72 & 79.14 & 78.93 & 0.23 & 0.50 \\
                             & Coverage    & 94.33 & 93.33 & 94.00 & 92.00 & 93.00 & 93.33 & 0.91 & 2.33 \\
                             & Per-sample  & 71.13 & 70.73 & 70.53 & 70.82 & 70.77 & 70.80 & 0.22 & 0.60 \\
    \bottomrule
  \end{tabular}
\end{table}

\subsection{Main comparison at 2560 evaluations}
\label{app:mainfull}
Table~\ref{tab:main} repeats the temperature-0.6 rows of Table~\ref{tab:nfe} at twice the computation, method by method, in both regimes; pure diffusion is the mechanism's measurement ground, not the operating point (Appendix~\ref{app:abl-blocks}).

\begin{table}[h]
  \centering
  \footnotesize
  \caption{Both decoding regimes under the protocol of Section~\ref{sec:experiments} ($K = 10$, 2560 NFEs, temperature 0.6, two to five seeds per configuration, five on the GSM8K \methodname, count row and two or three on the rest); TruthfulQA coverage is unreported, since guessing over four options already reaches 94.37. Bold: best per column within a regime, disagreement and the $K{=}1$ greedy reference row excluded. The pure-diffusion rows predate the end-of-sequence correction applied in Table~\ref{tab:nfe} (Appendix~\ref{app:eosfix}). Baselines are our reproductions (Appendix~\ref{app:details}).}
  \label{tab:main}
  \setlength{\tabcolsep}{2.0pt}
  \begin{tabular*}{\linewidth}{@{\extracolsep{\fill}}l!{\color{black!20}\vrule width 0.5pt}ccc!{\color{black!20}\vrule width 0.5pt}cc!{\color{black!20}\vrule width 0.5pt}ccc!{\color{black!20}\vrule width 0.5pt}ccc@{}}
    \toprule
    & \multicolumn{3}{c}{Plurality ($\uparrow$)} & \multicolumn{2}{c}{Coverage ($\uparrow$)} & \multicolumn{3}{c}{Per-sample ($\uparrow$)} & \multicolumn{3}{c}{Disagreement} \\
    \cmidrule(lr){2-4} \cmidrule(lr){5-6} \cmidrule(lr){7-9} \cmidrule(lr){10-12}
    Method & \scriptsize GSM & \scriptsize MATH & \scriptsize TQA & \scriptsize GSM & \scriptsize MATH & \scriptsize GSM & \scriptsize MATH & \scriptsize TQA & \scriptsize GSM & \scriptsize MATH & \scriptsize TQA \\
    \midrule
    \multicolumn{12}{l}{\emph{Semi-autoregressive blocks of 32}} \\
    Single greedy decode ($K{=}1$) & 73.00 & -- & -- & 73.00 & -- & 73.00 & -- & -- & 0.00 & -- & -- \\
    \cmidrule(l{2pt}r{2pt}){1-12}
    Self-consistency & 76.09 & 38.18 & 59.84 & 87.94 & 55.89 & \textbf{72.31} & \textbf{31.60} & 56.37 & 19.93 & 55.39 & 28.47 \\
    TAPS & 75.37 & 37.54 & 58.46 & 90.17 & 55.39 & 68.88 & 30.26 & 55.42 & 29.36 & 57.89 & 30.80 \\
    ODD ($\alpha{=}64$) & 77.75 & 37.49 & 59.31 & 91.58 & 56.20 & 69.14 & 26.88 & 54.01 & 31.00 & 66.43 & 37.55 \\
    ODD ($\alpha{=}256$) & 78.24 & 39.37 & 59.93 & 92.33 & 54.78 & 69.52 & 27.71 & 54.20 & 31.10 & 63.11 & 38.05 \\
    \cmidrule(l{2pt}r{2pt}){1-12}
    \methodname, count & 78.93 & 40.08 & 61.33 & 93.33 & \textbf{60.47} & 70.80 & 30.22 & \textbf{56.53} & 30.85 & 64.25 & 34.17 \\
    \methodname, exp.\ count & \textbf{79.89} & 39.79 & 60.01 & 94.00 & 58.43 & 69.81 & 29.77 & 52.64 & 33.49 & 64.04 & 40.76 \\
    \methodname, collision-wt.\ & 79.20 & \textbf{40.16} & \textbf{61.99} & \textbf{94.67} & 58.74 & 67.01 & 28.25 & 54.62 & 38.75 & 67.49 & 38.14 \\
    \midrule[0.12em]
    \multicolumn{12}{l}{\emph{Pure diffusion (whole response as one block)}} \\
    Self-consistency & 54.53 & 25.98 & 56.14 & 63.17 & 36.08 & 53.42 & 23.77 & 50.02 & 15.02 & 34.27 & 22.20 \\
    TAPS & 56.77 & 27.13 & 56.20 & 69.92 & 40.96 & 52.91 & 23.69 & \textbf{55.87} & 28.69 & 41.91 & 13.02 \\
    \cmidrule(l{2pt}r{2pt}){1-12}
    \methodname, count & 73.09 & 31.58 & 56.79 & 86.83 & 51.52 & 56.19 & 23.37 & 43.65 & 46.60 & 65.23 & 56.76 \\
    \methodname, exp.\ count & \textbf{76.87} & 34.44 & 56.60 & 90.67 & 53.96 & 62.52 & 25.10 & 51.46 & 40.15 & 64.06 & 37.88 \\
    \methodname, collision-wt.\ & 76.85 & \textbf{37.85} & \textbf{58.74} & \textbf{91.25} & \textbf{55.79} & \textbf{63.06} & \textbf{27.79} & 52.57 & 40.91 & 66.64 & 38.39 \\
    \bottomrule
  \end{tabular*}
\end{table}

\subsection{Ensemble size}
\label{app:abl-k}
Ensemble size is the parameter with the largest effect. The sample count enters the budget directly and the penalty through its normalization, so a sweep over it is not budget-matched; the strength moves with it.

The vote rises to ten samples and is then flat at fixed $\dose$: 78.93, 79.40, 79.38 at $\numpaths = 10, 20, 32$, while coverage keeps rising, 93.33, 95.50, 96.33, and per-sample accuracy holds, 70.80 to 71.04. At fixed gate the vote reads 80.29 and 81.18 at $\numpaths = 20, 32$, but per-sample accuracy falls 70.80 to 67.19 to 61.86 and disagreement climbs 30.85 to 48.25, the temperature signature of Table~\ref{tab:grids}; that rise is more spreading and cannot be read as $\numpaths$-scaling. Self-consistency's vote is flat from ten onward while its coverage rises, so the vote is the pool's bottleneck, not the penalty's. Below ten the vote climbs, 71.47 to 78.93 from two samples to ten and coverage 79.72 to 93.33; five samples at 1280 evaluations read 76.88, above self-consistency at ten and twice the budget. We hold $\dose$ across $\numpaths$.

\begin{table}[h]
  \centering
  \small
  \caption{Effect of ensemble size ($\numpaths$). Fixed: 256 steps, block 32, temperature 0.6, count statistic, GSM8K, $n=600$. Not budget-matched: NFE $= \numpaths \times$ steps and is the price of each row. Rows below ten hold the gate at 8.00; the two conventions meet at $\numpaths = 9$ ($\dose = 64$). Rows above ten hold $\dose = 64$, the $\numpaths$-invariant scaling of Corollary~\ref{cor:sparsemax}, with the fixed-gate rows as the diagnostic; self-consistency rows above ten pool seeds, which their i.i.d.\ draws license. The $\dose = 0$ rows are self-consistency, the same loop with the penalty off; a dash is not measured.}
  \label{tab:abl-k}
  \setlength{\tabcolsep}{5pt}
  \begin{tabular}{rrrrrrrrrr}
    \toprule
    $\numpaths$ & $\dose$ & $\gate$ & NFE & $n$ & Seeds & Plurality & Coverage & Per-sample & Disagr. \\
    \midrule
    \multicolumn{10}{l}{\emph{Fixed gate $\gate = 8.00$}} \\
    2  & 8   & 8.00 & 512   & 600 & 3 & 71.47 & 79.72 & 71.47 & 25.39 \\
    3  & 16  & 8.00 & 768   & 600 & 3 & 74.44 & 84.39 & 72.00 & 26.07 \\
    5  & 32  & 8.00 & 1280  & 600 & 3 & 76.88 & 88.78 & 71.39 & 28.03 \\
    7  & 48  & 8.00 & 1792  & 600 & 3 & 77.57 & 91.50 & 71.48 & 28.55 \\
    \addlinespace
    \multicolumn{10}{l}{\emph{Fixed strength $\dose = 64$, the $\numpaths$-invariant scaling of Corollary~\ref{cor:sparsemax}}} \\
    10 & 64  & 7.11 & 2560  & 600 & 5 & 78.93 & 93.33 & 70.80 & 30.85 \\
    20 & 64  & 3.37 & 5120  & 600 & 2 & 79.40 & 95.50 & 70.93 & 29.78 \\
    32 & 64  & 2.06 & 8192  & 600 & 2 & 79.38 & 96.33 & 71.04 & 29.50 \\
    \addlinespace
    \multicolumn{10}{l}{\emph{Fixed gate $\gate = 7.11$ (diagnostic; per-sample falls, disagreement climbs)}} \\
    20 & 135 & 7.11 & 5120  & 600 & 2 & 80.29 & 96.58 & 67.19 & 38.48 \\
    32 & 220 & 7.11 & 8192  & 600 & 2 & 81.18 & 97.75 & 61.86 & 48.25 \\
    \addlinespace
    \multicolumn{10}{l}{\emph{Self-consistency ($\dose = 0$); rows above ten pool seeds}} \\
    10 & 0   & 0.00 & 2560  & 600 & 3 & 76.09 & 87.94 & 72.31 & 19.93 \\
    20 & 0   & 0.00 & 5120  & 600 & 5 & 76.37 & 89.81 & --    & --    \\
    32 & 0   & 0.00 & 8192  & 600 & 5 & 76.48 & 90.85 & --    & --    \\
    50 & 0   & 0.00 & 12800 & 600 & 5 & 76.83 & 91.67 & --    & --    \\
    \bottomrule
  \end{tabular}
\end{table}

\subsection{Small-effect parameters}
\label{app:abl-sweeps}
\label{app:abl-steps}
The remaining parameters move the vote least; Table~\ref{tab:abl-sweeps} collects the two closest, the samples-versus-steps split of a fixed budget and the choice of statistic, as deviations from the shared reference. Sample count dominates step count at a fixed 1280-evaluation budget: ten samples of 128 steps read 79.11 against 76.88 for five of 256, coverage 93.53 against 88.78, per-sample the other way. The two distribution-reading variants replace the count by the peers' predicted distributions $\pbar[j,\cdot]$: \emph{expected-count} uses the expected number of peers on each token under $\pbar$, and \emph{collision-weighted} scales that by the collision probability between the path's own law and the peers' mean. Among the statistics, expected-count reads highest at 80.65 and collision-weighted 80.07 against the count's 79.11; those gaps of 1.54 and 0.96 reverse across benchmarks, so we deploy the count everywhere.

\begin{table}[h]
  \centering
  \small
  \caption{The small-effect parameters, as deviations from the reference configuration (GSM8K, $n=600$, $\numpaths=10$, 128 steps, block 32, $\temp=0.6$, count statistic, 1280 network function evaluations). The reference and self-consistency rows are shared by both blocks; the statistic rows move only the statistic, while the split row re-allocates the fixed budget to five samples of 256 steps and so also lowers the strength to $\dose=32$ (gate 8.00). Seeds as stated; higher is better in the accuracy columns; disagreement is a rate, not an objective.}
  \label{tab:abl-sweeps}
  \setlength{\tabcolsep}{5pt}
  \begin{tabular}{llrrrrr}
    \toprule
    Parameter & Setting & Seeds & Plurality & Coverage & Per-sample & Disagr. \\
    \midrule
    Reference        & count, $\dose{=}64$, $\gate{=}7.11$ & 5 & 79.11 & 93.53 & 68.21 & 35.53 \\
    Self-consistency & $\dose{=}0$, $\gate{=}0$  & 5 & 77.05 & 89.70 & 70.04 & 27.14 \\
    \addlinespace
    Split at 1280 NFE & 5 samples $\times$ 256 steps, $\dose{=}32$ & 3 & 76.88 & 88.78 & 71.39 & 28.03 \\
    \addlinespace
    Statistic        & expected count          & 2 & 80.65 & 94.67 & 66.47 & 40.60 \\
    Statistic        & collision-weighted& 2 & 80.07 & 94.42 & 61.87 & 48.63 \\
    \bottomrule
  \end{tabular}
\end{table}

The vote is flat in the denoising step count, which carries no table of its own: doubling the step budget from 1280 to 2560 evaluations moves the vote 0.18 over five seeds against the 2.06 the method gains over self-consistency.

\subsection{Penalty by decoding regime}
\label{app:abl-crossover}
On per-sample accuracy the ordering of the count and collision-weighted statistics reverses between the two regimes on all three benchmarks (Table~\ref{tab:crossover}), so neither distribution-aware statistic transfers between regimes.

\begin{table}[h]
  \centering
  \small
  \caption{Penalty by decoding regime on per-sample accuracy, paired per problem and seed-averaged over seeds $\{0,1\}$, at 2560 network function evaluations. Each entry is the named penalty minus the count statistic in that regime, positive favouring the named penalty; the interaction is the pure-diffusion effect minus the block-32 effect, taken within problem. These within-problem paired contrasts on two seeds do not reproduce the seed-mean differences of Table~\ref{tab:main}.}
  \label{tab:crossover}
  \setlength{\tabcolsep}{4.5pt}
  \begin{tabular}{lrrrrrr}
    \toprule
    & \multicolumn{3}{c}{Expected count $-$ count} & \multicolumn{3}{c}{Collision-weighted $-$ count} \\
    \cmidrule(lr){2-4} \cmidrule(lr){5-7}
    Benchmark & Diffusion & Block 32 & Interaction & Diffusion & Block 32 & Interaction \\
    \midrule
    GSM8K ($n=600$) & $+6.33$ & $-1.53$ & $+7.86$  & $+6.87$ & $-4.23$ & $+11.10$ \\
    MATH ($n=492$)  & $+1.73$ & $-0.46$ & $+2.18$  & $+4.42$ & $-1.97$ & $+6.39$  \\
    TruthfulQA ($n=684$) & $+7.81$ & $-3.70$ & $+11.51$ & $+8.91$ & $-1.45$ & $+10.36$ \\
    \bottomrule
  \end{tabular}
\end{table}

\subsection{Decoding regime}
\label{app:abl-blocks}
Pure diffusion decodes the whole response as one block and is LLaDA's default decoding; for the Instruct model, LLaDA reports blocks of 32 as stronger on GSM8K and MATH \citep{nie2025llada}. The penalty reads $+2.06$ over self-consistency in blocks and $+15.91$ under pure diffusion, as differences of the marginals below (these pure-diffusion rows predate the end-of-sequence correction; with it the gain is $+14.50$, Table~\ref{tab:nfe}); the second regime is where the pool is most redundant, so it is where the mechanism has the most to convert.

\begin{table}[h]
  \centering
  \small
  \caption{Effect of the decoding regime. Fixed: $\numpaths = 10$, 128 steps, temperature 0.6, count statistic, GSM8K, $n=600$. Both pure-diffusion rows predate the released end-of-sequence correction; Table~\ref{tab:nfe} carries the corrected rows and Appendix~\ref{app:eosfix} reports what the correction is worth. Per-sample accuracy under the parsed-answer denominator is 0.41 higher on both pure-diffusion rows.}
  \label{tab:abl-blocks}
  \setlength{\tabcolsep}{5pt}
  \begin{tabular}{lrrrrrrrr}
    \toprule
    Regime & $\dose$ & $\gate$ & $n$ & Seeds & Plurality & Coverage & Per-sample & Disagr. \\
    \midrule
    Blocks of 32   & 64 & 7.11 & 600 & 5 & 79.11 & 93.53 & 68.21 & 35.53 \\
    Blocks of 32   & 0  & 0.00 & 600 & 5 & 77.05 & 89.70 & 70.04 & 27.14 \\
    \addlinespace
    Pure diffusion & 64 & 7.11 & 600 & 2 & 70.30 & 87.25 & 52.58 & 51.18 \\
    Pure diffusion & 0  & 0.00 & 600 & 2 & 54.39 & 65.83 & 52.02 & 21.08 \\
    \bottomrule
  \end{tabular}
\end{table}

\subsection{Recommended defaults}
\label{app:abl-defaults}
\begin{table}[H]
  \centering
  \small
  \caption{Recommended settings from the sweeps above. The range column states where the setting is supported by a measurement; a dash means we have no range.}
  \label{tab:abl-defaults}
  \footnotesize
  \setlength{\tabcolsep}{5pt}
  \begin{tabular}{lll>{\raggedright\arraybackslash}p{6cm}}
    \toprule
    Parameter & Recommended & Range & Note \\
    \midrule
    Ensemble size $\numpaths$   & 10           & 2 to 32    & Vote flat past ten at fixed $\dose$; coverage still rising \\
    Strength $\dose$            & 64           & --         & Selected on 200 problems at one seed \\
    Strength across $\numpaths$ & $\dose = 64$ & 10 to 32   & Hold $\dose$, not $\gate$ (Corollary~\ref{cor:sparsemax}); fixed $\gate$ at $\numpaths \ge 20$ costs per-sample accuracy \\
    Denoising steps             & 128          & 128 to 256 & Vote flat in the step count; 128 is cheaper \\
    Temperature                 & 0.6 or 0     & 0 to 0.6   & Zero (gate 8, $\dose{=}72$) is 1.27 above the sampled configuration ($\dose{=}64$) at 128 steps and 0.33 below it at 256; uses no randomness \\
    Statistic                   & Count        & --         & Ordering is benchmark-dependent \\
    Decoding regime             & Blocks of 32 & --         & Stronger than pure diffusion on GSM8K and MATH \citep{nie2025llada} \\
    \bottomrule
  \end{tabular}
\end{table}

\subsection{Second checkpoint: LLaDA-1.5}
\label{app:llada15}
\looseness=-1 We repeat the deployed configuration on LLaDA-1.5 \citep{zhu2025llada15}, the preference-optimized checkpoint of LLaDA-8B-Instruct, which the released decoder runs unchanged; the paired gains over self-consistency are in Table~\ref{tab:llada15}, and the evidence is GSM8K only.

\begin{table}[ht]
  \centering
  \small
  \caption{Second checkpoint: LLaDA-1.5 on GSM8K, $\numpaths{=}10$, semi-autoregressive blocks of 32, 2560 evaluations, $n{=}600$ over two seeds (1200 paired observations). Bare means; higher is better for plurality, coverage (pass@10) and per-sample; distinct answers per pool and disagreement are descriptive. Bold marks the better method on each accuracy column. Paired \methodname minus self-consistency: coverage $+6.75$, plurality $+2.46$, per-sample $-1.82$. The coverage gain differs from the LLaDA-8B-Instruct gain on the same problems by $+0.58$.}
  \label{tab:llada15}
  \begin{tabular}{lrrrrr}
    \toprule
    Method & Plur. ($\uparrow$) & Cover. ($\uparrow$) & Per-s. ($\uparrow$) & Distinct & Disagr. \\
    \midrule
    Self-consistency & 76.85 & 87.25 & \textbf{74.15} & 1.69 & 17.82 \\
    \methodname, count & \textbf{79.31} & \textbf{94.00} & 72.33 & 2.26 & 28.22 \\
    \bottomrule
  \end{tabular}
\end{table}

\section{Further results}
\label{app:further}

\subsection{Position by position decoding}
\label{app:example}

Figure~\ref{fig:latticepair} shows one problem decoded twice at temperature zero, once with the penalty off and once with it on, on the canvas the samples share. A row is one position and a column is one of the ten samples. Temperature zero makes the comparison exact, since the decode consumes no randomness. With the penalty off, the ten samples are one string repeated ten times, which is Proposition~\ref{prop:degeneracy} and is the upper lattice. Every difference in the lower lattice is therefore the penalty and nothing else. A cell is blue where the sample wrote a token no peer had taken, 57.5\% of the written cells. A cell is grey where the schedule releases the penalty in the last quarter of a block, the steps left outside the penalty scope $\rho$ (Section~\ref{sec:method-sampler}), 24.1\%. The rest are a path arriving first at a position, 8.9\%, or a path writing a token a peer held, 9.5\%. A path writes a token a peer held when its margin beats the gate of eight raw logits per peer. The top rows are mostly grey, which the commit order explains. The decoder commits in order of confidence. The opening word of an answer is among the least certain positions of its block, so most samples write it last. That is exactly when the penalty has been released. All twenty samples answer correctly. The pair therefore shows ten distinct decodes through the same slots against one decode repeated ten times. Sample 9 is shifted against its peers: the canvas aligns indices and not content, so part of the pool's disagreement is a phrasing shift of that kind.

\begin{figure}[ht]
  \centering
  \includegraphics[width=\linewidth]{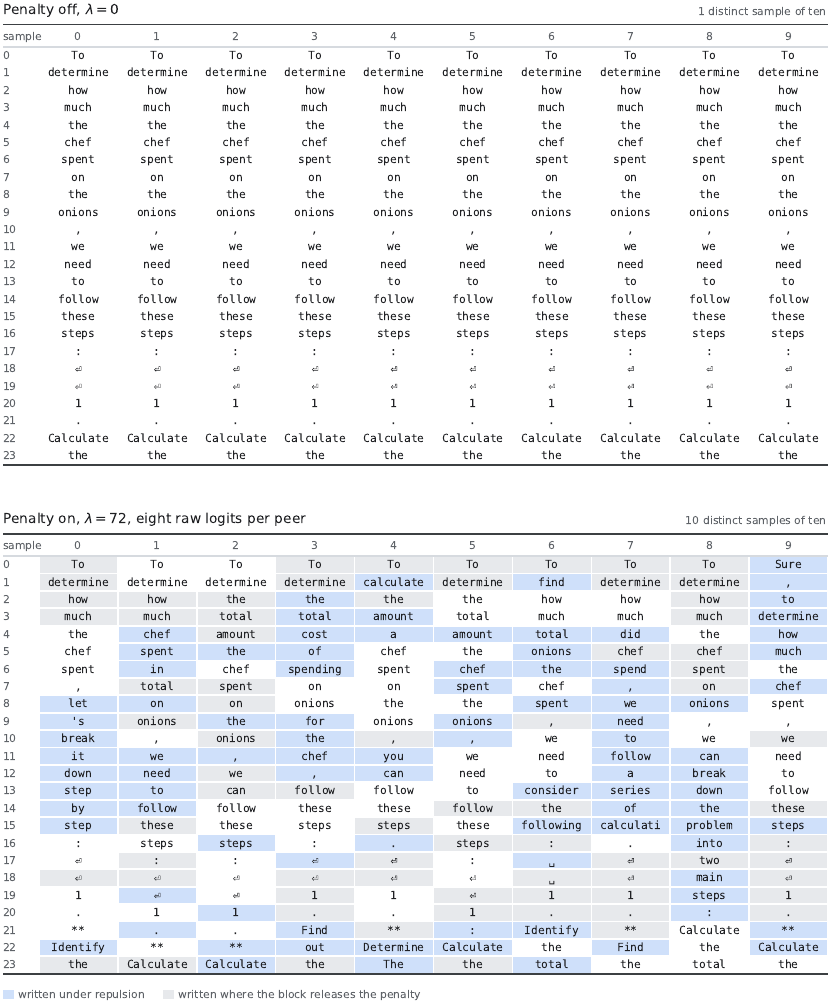}
  \caption{One problem decoded twice at temperature zero, the penalty off above and on below. A row is one position of the canvas, a column is one of the ten samples; in the lower half blue marks a token written under repulsion, grey one written where the block releases the penalty. The section text gives the rest.}
  \label{fig:latticepair}
\end{figure}

\subsection{Each family at its own best temperature.}
\label{sec:results-tuned}
On GSM8K, self-consistency improves when its temperature is tuned. We therefore re-sweep every family and compare it at its own best setting as well as at the shared 0.6.

\begin{table}[h]
  \centering
  \footnotesize
  \caption{Every family at its own temperature, 1280 network function evaluations, $\numpaths = 10$, blocks of 32. Left: plurality accuracy ($\uparrow$) with each family at a temperature its authors report, self-consistency at both 0.6 and 1.0 since it has no published setting for this model. Right: the mean paired difference of \methodname with the count statistic at $\temp = 0.6$ over that row, on the problems the two methods share. GSM8K $n{=}600$, MATH $n{=}492$, TruthfulQA $n{=}684$; two seeds throughout, five for self-consistency at 0.6 and for the count statistic on GSM8K; the $\temp = 0$ row is one deterministic decode. Baselines are our reproductions.}
  \label{tab:tuned}
  \setlength{\tabcolsep}{3.0pt}
  \begin{tabular}{l@{\hskip 9pt}ccc@{\hskip 9pt}ccc}
    \toprule
    & \multicolumn{3}{c}{Plurality at own $\temp$ ($\uparrow$)} & \multicolumn{3}{c}{Difference from \methodname} \\
    \cmidrule(lr){2-4} \cmidrule(lr){5-7}
    Method & \scriptsize GSM & \scriptsize MATH & \scriptsize TQA & \scriptsize GSM & \scriptsize MATH & \scriptsize TQA \\
    \midrule
    Self-consistency, $\temp = 0.6$ & 77.05 & 40.14 & 59.02 & $+2.06$ & $-0.10$ & $+0.76$ \\
    Self-consistency, $\temp = 1.0$ & 78.28 & 39.47 & 58.65 & $+0.83$ & $+0.57$ & $+1.12$ \\
    TAPS, $\temp = 1.0$ & 77.87 & 39.31 & 57.74 & $+1.24$ & $+0.73$ & $+2.03$ \\
    ODD ($\alpha{=}64$), $\temp = 1.0$ & 78.23 & 36.66 & 60.86 & $+0.88$ & $+3.38$ & $-1.08$ \\
    ODD ($\alpha{=}256$), $\temp = 1.0$ & 78.61 & 37.52 & 60.12 & $+0.50$ & $+2.52$ & $-0.35$ \\
    \midrule
    \methodname, count, $\temp = 0.6$ & 79.11 & 40.04 & 59.77 & --- & --- & --- \\
    \methodname, count, $\temp = 0$ & 80.38 & 40.86 & 59.84 & $-1.27$ & $-0.81$ & $-0.07$ \\
    \methodname, exp.\ count, $\temp = 0.6$ & 80.65 & 39.58 & 59.82 & $-1.54$ & $+0.46$ & $-0.05$ \\
    \methodname, coll.-wt., $\temp = 0.6$ & 80.07 & 38.81 & 60.60 & $-0.95$ & $+1.23$ & $-0.83$ \\
    \bottomrule
  \end{tabular}
\end{table}

\looseness=-1 Tuning moves the baselines by about a point and in both directions: ODD ($\alpha{=}256$) gains 0.61 on GSM8K between 0.6 and 1.0 while losing 0.23 on MATH, so the shared temperature of Table~\ref{tab:nfe} does not systematically favour us. The differences are largest on MATH against ODD and smallest on TruthfulQA, where ODD at $\alpha = 64$ leads. At 2560 evaluations the GSM8K picture is the same: $+0.68$ and $+0.63$ over the two tuned ODD settings; $+14.40$ and $+18.17$ over tuned TAPS under pure diffusion.

\subsection{Repulsion alone, at temperature zero}
\label{sec:results-argmax}

\looseness=-1 At temperature zero the penalty is the only source of a pool. Argmax paths with the penalty off are identical copies, so self-consistency collapses. We sweep the per-peer statistic $\dose/(\numpaths-1)$, the margin in raw logits a peer's commitment must overcome. We repeat the design as a sequential chain of five drafts, each repelled from those completed. The decode uses no randomness, so each cell is one deterministic decode.

\begin{table}[h]
  \centering
  \footnotesize
  \caption{The penalty at temperature zero, GSM8K, $n = 600$, blocks of 32, 256 steps. One decode per cell. Agreement is the mean fraction of sample pairs returning the same answer; distinct is the mean number of different answers in a pool. The last rows of each block are the sampled references at $\temp = 0.6$, seed 0, and the single greedy decode.}
  \label{tab:argmax}
  \setlength{\tabcolsep}{3.2pt}
  \begin{tabular}{lrrrrrrr}
    \toprule
    Configuration & Gate $\gate$ & $\dose$ & Agreement & Distinct & Per-sample ($\uparrow$) & Coverage ($\uparrow$) & Plurality ($\uparrow$) \\
    \midrule
    \multicolumn{8}{l}{\emph{Parallel, $\numpaths = 10$, the deployed design}} \\
    Penalty off & 0 & 0 & 100.00 & 1.00 & 73.00 & 73.00 & 73.00 \\
    Count penalty & 1 & 9 & 77.54 & 1.91 & 72.17 & 89.50 & 76.97 \\
    Count penalty & 2 & 18 & 75.62 & 2.04 & 71.85 & 90.83 & 77.31 \\
    Count penalty & 4 & 36 & 73.04 & 2.18 & 71.90 & 92.00 & 77.97 \\
    Count penalty & 8 & 72 & 68.27 & 2.43 & 69.65 & 92.33 & 78.60 \\
    \cmidrule(l{2pt}r{2pt}){1-8}
    \methodname, $\temp = 0.6$ & 7.1 & 64 & 69.36 & 2.40 & 71.13 & 94.33 & 78.82 \\
    Self-consistency, $\temp = 0.6$ & --- & --- & 79.85 & 1.83 & 72.53 & 88.17 & 76.33 \\
    \midrule[0.12em]
    \multicolumn{8}{l}{\emph{Sequential, five drafts, each repelled from those completed}} \\
    Penalty off & 0 & 0 & 100.00 & 1.00 & 73.00 & 73.00 & 73.00 \\
    Count penalty & 1 & 4 & 76.32 & 1.61 & 73.00 & 88.83 & 76.50 \\
    Count penalty & 2 & 8 & 76.25 & 1.63 & 73.00 & 88.50 & 76.67 \\
    Count penalty & 8 & 32 & 70.93 & 1.77 & 71.90 & 90.33 & 78.03 \\
    Count penalty & 16 & 64 & 59.93 & 2.10 & 66.33 & 90.17 & 78.12 \\
    \cmidrule(l{2pt}r{2pt}){1-8}
    Single greedy decode & --- & --- & --- & --- & 73.00 & 73.00 & 73.00 \\
    \bottomrule
  \end{tabular}
\end{table}

\looseness=-1 With the penalty off, both designs are exactly degenerate: they return 100.00\% agreement, 1.00 distinct answers; per-sample, coverage and plurality all equal to the single greedy decode's 73.00. A pool of ten therefore adds nothing over a pool of one. Every number above it comes from the peer term. Plurality is monotone in the gate in both designs. At gate 8, nearest the deployed strength, the parallel design reaches 78.60 against self-consistency's 76.33. It comes within 0.33 of the deployed sampled configuration's five-seed plurality of 78.93 (Appendix~\ref{app:abl-seeds}). Coverage still favours the sampled configuration, 93.33 against 92.33, so the penalty closes the plurality gap and not the coverage gap. The first path is unpenalised only in the sequential design, where draft one matches the greedy anchor exactly at gates 1 and 2. In the parallel design path zero commits inside the step it is read for, so it does not.

\looseness=-1 A sharper control fixes disagreement and removes the peer read while keeping the allocation law. Each sample draws its tokens in proportion to the water-filling counts of its own logits (Proposition~\ref{prop:allocation}), peer-blind, at the gate matching the penalty's disagreement (32.56 against 34.35; GSM8K in blocks, 1280 evaluations, three seeds). This control gains $0.69$ in plurality over self-consistency at $\temp = 0.6$, a gain we do not establish, while \methodname at temperature zero gains $2.46$ over the control, which we do. The allocation law alone does not carry the gain; the peer read does.

\subsection{Path-index symmetry}
\label{sec:results-symmetry}

\looseness=-1 Remark~\ref{prop:canonicity} settles the processing order in structure. Permuting an order held fixed across steps returns the same pool relabeled, so no ablation over such orders exists; an order that varies across steps is a change of information structure, which the remark leaves free. The remark leaves open whether the position a path occupies predicts whether it is correct, a statement about marginals. With the penalty on, per-sample accuracy across the ten parallel indices spreads 3.17 points at gate 8 and shows no ordering. The sequential chain, where draft $i$ is repelled from $i-1$ predecessors and a gradient exists by construction, runs 73.00, 69.67, 66.50, 63.83, 58.67 down the draft order at gate 16.

\looseness=-1 The information structure is the one order choice the remark leaves free. Table~\ref{tab:filtration} quantifies it: a triangular filtration in which path $i$ reads only paths $0 \ldots i-1$, against the deployed mutual cascade at matched gate.

\begin{table}[h]
  \centering
  \footnotesize
  \caption{The cost of the information structure. GSM8K, $n = 600$, temperature zero, one decode per cell. Path 0 of the filtration is the released greedy decode, its answers identical across the three configurations on all 600 problems and reading the anchor's 73.00; the symmetric control is the deployed mutual cascade at the same gate, decoded in the same campaign, reproducing the banked symmetric cell on all 600.}
  \label{tab:filtration}
  \setlength{\tabcolsep}{4.2pt}
  \begin{tabular}{lrrrrrr}
    \toprule
    Coupling & Gate $\gate$ & $\dose$ & Agreement & Per-sample ($\uparrow$) & Coverage ($\uparrow$) & Plurality ($\uparrow$) \\
    \midrule
    Triangular filtration & 4 & 36 & 76.53 & 71.75 & 90.00 & 76.12 \\
    Triangular filtration & 8 & 72 & 73.12 & 71.30 & 92.00 & 77.41 \\
    Triangular filtration & 16 & 144 & 67.72 & 68.67 & 91.50 & 77.19 \\
    Mutual cascade, symmetric & 8 & 72 & 68.27 & 69.65 & 92.33 & 78.60 \\
    \bottomrule
  \end{tabular}
\end{table}

\looseness=-1 At matched gate the filtration votes 1.19 below the symmetric cascade and is ahead on per-sample accuracy, 71.30 against 69.65. Its agreement at gate 16, 67.72, lands on the symmetric cascade's gate-8 value of 68.27, the halving that $\numpaths(\numpaths-1)/2$ ordered pairs against $\numpaths(\numpaths-1)$ predicts. At that matched agreement the symmetric cascade still converts better, 78.60 against 77.19. We keep the symmetric cascade deployed.

\subsection{Decomposition of the gain into coverage and selectivity}
\label{sec:results-decomposition}

\paragraph{Decomposition of the coverage gain into the per-sample term and dependence.}
\looseness=-1 \methodname's coverage gain is the per-sample term, not dependence between samples: the Fr\'echet cap of Proposition~\ref{prop:frechet} at the penalty's measured accuracies on GSM8K with blocks averages $1.94$ points, against a coverage gain of $+5.94$ over three seeds. Measured directly by a within-pool against cross-pool contrast that is exactly zero for a peer-blind sampler, the pool dependence is $+0.23$ for the penalty against $+0.02$ for self-consistency; the contrast draws three samples index-blind without replacement from one pool against one from each of three independent pools, which carry the same marginals.

\looseness=-1 One result bounds this measurement in advance. At fixed per-path marginals, re-coupling alone raises coverage above the i.i.d.\ value by at most $(1 - 1/\numpaths)^{\numpaths} < e^{-1}$, and by order $10^{-3}$ per problem at a per-path rate of one half. At the per-path rates measured on GSM8K a gain of practical size must therefore come from the induced marginals; at rates near $0.2$, as on MATH, the per-problem ceiling reaches ten points and the split must be read at the measured marginals. The net effect on the vote is therefore empirical. Plurality factors exactly as $P(\plur) = P(\passat) \cdot P(\plur \mid \passat)$, coverage times the vote's accuracy given coverage. Writing $(a,x)$ for \methodname and $(b,y)$ for self-consistency, the gain $\Delta = ax - by$ splits exactly into a coverage term $(a-b)(x+y)/2$ and a selectivity term $(x-y)(a+b)/2$, each factor's change read at the mean of the other; this symmetric attribution averages the two orderings, which differ by the interaction $(a-b)(x-y)$. This attributes every change to a wider pool or a more selective vote.

\begin{table}[h]
  \centering
  \footnotesize
  \caption{Decomposition of the gain. Each row is \methodname against self-consistency in the same regime at 2560 network function evaluations, $\numpaths = 10$, temperature 0.6, two seeds, paired on the problems both methods share. Coverage and selectivity are the two terms of the split defined in the text; they sum to $\Delta$ in every row. Fixes and breaks are the fractions of problem-seed units whose vote changes in each direction. On TruthfulQA ten uniform guesses over four options already cover the correct option with probability 94.37, which the pure-diffusion count row's coverage matches, so its coverage column is uninformative. Note: the pure-diffusion rows predate the end-of-sequence correction (Appendix~\ref{app:eosfix}).}
  \label{tab:decomp}
  \setlength{\tabcolsep}{5.0pt}
  \begin{tabular}{llrrrrr}
    \toprule
    Benchmark & Penalty & $\Delta$ plurality & Coverage & Selectivity & Fixes & Breaks \\
    \midrule
    \multicolumn{7}{l}{\emph{Semi-autoregressive blocks of 32}} \\
    GSM8K & count & $+3.92$ & $+5.27$ & $-1.35$ & 7.17 & 3.25 \\
    GSM8K & collision-wt. & $+2.92$ & $+5.98$ & $-3.06$ & 6.83 & 3.92 \\
    MATH & count & $+2.34$ & $+3.09$ & $-0.75$ & 6.00 & 3.66 \\
    MATH & collision-wt. & $+2.13$ & $+1.94$ & $+0.19$ & 5.18 & 3.05 \\
    TruthfulQA & count & $+0.88$ & $+2.36$ & $-1.48$ & 6.65 & 5.77 \\
    TruthfulQA & collision-wt. & $+3.07$ & $+3.43$ & $-0.36$ & 8.33 & 5.26 \\
    \midrule[0.12em]
    \multicolumn{7}{l}{\emph{Pure diffusion}} \\
    GSM8K & count & $+19.17$ & $+20.19$ & $-1.03$ & 21.50 & 2.33 \\
    GSM8K & collision-wt. & $+22.58$ & $+23.91$ & $-1.33$ & 24.92 & 2.33 \\
    MATH & count & $+5.69$ & $+10.25$ & $-4.56$ & 9.65 & 3.96 \\
    MATH & collision-wt. & $+11.89$ & $+13.71$ & $-1.82$ & 15.04 & 3.15 \\
    TruthfulQA & count & $+0.88$ & $+16.36$ & $-15.48$ & 5.99 & 5.12 \\
    TruthfulQA & collision-wt. & $+2.27$ & $+10.22$ & $-7.95$ & 8.99 & 6.73 \\
    \bottomrule
  \end{tabular}
\end{table}

\looseness=-1 Coverage accounts for between 51 and 95 percent of the moved mass in every row. The selectivity term is negative in eleven of the twelve. The vote is therefore never the source of the gain and usually offsets part of it. Churn is asymmetric and the asymmetry is the whole net effect. On GSM8K with blocks the penalty fixes 7.17\% of problem-seed units and breaks 3.25\%. Under pure diffusion it fixes 21.50\% and breaks 2.33\%. A rescue analysis settles a mechanism question. If the penalty helped by breaking a concentrated wrong consensus, then among the problems self-consistency gets wrong the rescue rate would rise with the concentration of its error. It falls instead, with a negative correlation in ten of the twelve rows; $+0.00$ and $+0.07$ in the other two. The problems rescued are the ones on which the baseline was scattered, not the ones on which it was confidently wrong.

\looseness=-1 Better aggregation does not help: across two waves of selectors, learned and heuristic, none beats plain plurality on the deployed configuration. The coverage-to-plurality gap is real, 79.38 against 96.33 at $\numpaths = 32$ on GSM8K (Appendix~\ref{app:abl-k}); nothing we built closes it.

\subsection{Coverage prediction}
\label{app:covpred}

\looseness=-1 A wider pool is priced in per-sample accuracy: with $\numpaths$ near-independent draws, coverage on a problem whose per-sample accuracy is $a$ is $1 - (1-a)^{\numpaths}$, so an extra unit of per-sample accuracy is worth $\numpaths (1-a)^{\numpaths - 1}$ of coverage, the probability that no other sample would have found the answer. At $\numpaths = 10$ that rate is 10 at $a = 0$ and $10^{-8}$ at $a = 0.9$. It is a falsifiable curve and the decodes test it.

\looseness=-1 A cross-benchmark reading appears to refute it. MATH has far lower per-sample accuracy than GSM8K, 31.60 against 72.31, and the smaller coverage term, $+3.09$ against $+5.27$ in blocks, which is the opposite of ``largest where the model rarely succeeds''. That reading confounds two things. The rate multiplies the per-sample shift the penalty produces, and that shift is not constant across benchmarks. The test has to hold the benchmark fixed and vary $a$, which is what Figure~\ref{fig:covpred} does: problems are stratified by the baseline's per-sample accuracy, and each stratum's measured coverage change is set against the change independence predicts at that stratum's measured per-sample shift, $(1-a_{\mathrm{SC}})^{\numpaths} - (1-a_{\methodname})^{\numpaths}$.

\begin{figure}[h]
  \centering
  \includegraphics{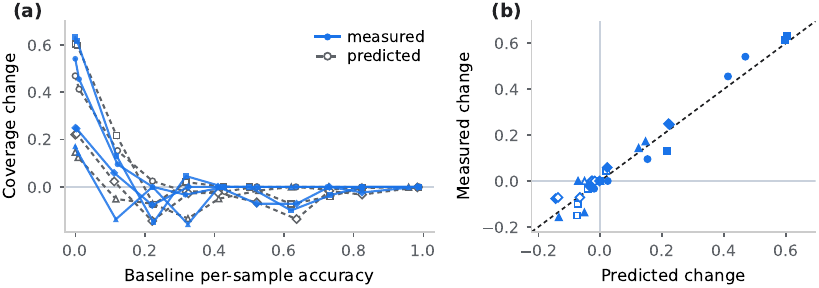}
  \caption{The exchange rate against the decodes. Problems are stratified by the baseline's per-sample accuracy; each point is one stratum of one benchmark and regime, at 2560 network function evaluations, $\numpaths = 10$, temperature 0.6, two seeds, paired per problem. Predicted is $(1-a_{\mathrm{SC}})^{\numpaths} - (1-a_{\methodname})^{\numpaths}$, the coverage the stratum's measured per-sample shift buys if the samples were independent. \textbf{(a)} coverage change against baseline accuracy, measured and predicted. \textbf{(b)} the same points, measured against predicted, with the identity line; open markers are strata of fewer than 15 problems. Circles and squares are GSM8K in blocks and under pure diffusion, triangles and diamonds are MATH.}
  \label{fig:covpred}
\end{figure}

\looseness=-1 The account survives, and it survives quantitatively rather than in direction only. Summed over strata and weighted by their size, the ratio of measured to predicted coverage change is 1.07 and 1.02 on GSM8K in the two regimes and 1.11 and 1.16 on MATH: between two and sixteen percent more coverage than the per-sample shift buys under independence, which is the residual dependence measured directly in Section~\ref{sec:results-decomposition} and is small beside the shift itself. The concentration the rate predicts is pronounced. On GSM8K in blocks the stratum the baseline never solves holds 61 problems and gains $0.541$ of coverage; every stratum above $a = 0.35$ gains $0.000$ to three decimals. Under pure diffusion the same stratum holds 204 problems and gains $0.632$.

\looseness=-1 This also settles the MATH question without appeal to the vote. At $a = 0$ the rate is $\numpaths$ on both benchmarks, so the rate is not what differs; the per-sample shift is. The penalty moves that stratum by $+0.113$ on GSM8K and by $+0.027$ on MATH, a factor of four, and the coverage gains follow at $0.541$ and $0.172$. MATH gains less coverage because the penalty finds fewer new answers there, not because the pool prices them differently.

\looseness=-1 The same fact is visible without any model, in how many of the $\numpaths$ traces are correct. The penalty empties both ends of that distribution and fills the middle. On GSM8K in blocks the share of problems no trace solves falls from 0.130 to 0.067 and the share every trace solves falls from 0.555 to 0.413; under pure diffusion the two fall from 0.383 to 0.137 and from 0.435 to 0.157. The first movement is the coverage gain, since coverage is one minus the share with no correct trace, and the second is what the vote pays for it, since a problem that leaves unanimity can lose its plurality. On MATH in blocks the share at zero barely moves, 0.461 to 0.435, which is the same shortfall the strata show and the reason its coverage gain is small.

\subsection{Reading the two sweeps}
\label{app:grids}

\looseness=-1 Three readings of Table~\ref{tab:grids} are not available, and the table marks none of them. Panel B's plurality column has no readable maximum: from $\dose = 32$ to $384$ it spans 1.4 points, which is why bold marks temperature's best rungs only. Its per-sample marginal exceeding its own $\dose = 0$ value is a mixture effect of self-consistency's bimodality in this regime and supports no claim on its own; the per-sample edge is the one read at matched disagreement, which Section~\ref{sec:results-mechanism} states. And the penalty's disagreement saturates rather than ending where the sweep does, moving 52.01 to 53.46 as $\dose$ doubles from 192 to 384, which is what a gate already past hard exclusion predicts; temperature has no such ceiling and reaches 88.87 at $\temp = 3.0$, by destroying the pool.

\subsection{Pass@$k$}
\label{app:passk}

\looseness=-1 Coverage in the tables above is pass@10; Table~\ref{tab:passk} and Figure~\ref{fig:passk} report the whole curve for three samplers at 2560 evaluations, and Table~\ref{tab:passkmain} the four-point curve for every method of Table~\ref{tab:nfe} at its own campaigns; both use the estimator $1 - \binom{n-c}{k}/\binom{n}{k}$ over the $n = 10$ samples per problem \citep{chen2021evaluating}, which is unbiased for independent samples. The shape is the same in all four cells: \methodname trails self-consistency at $k = 1$ in three of them, which is per-sample accuracy and the price already discussed, crosses by $k = 2$, and the gap widens monotonically in $k$ from there. Under pure diffusion the collision-weighted statistic is ahead at every $k$, including $k = 1$. MATH, where the vote converts least, is $+4.57$ at $k = 10$ in the deployed regime: the pool improves even where plurality cannot convert it, and the curve is the ceiling any selector over the pool could reach. TruthfulQA is excluded, since four options give a chance curve of $1 - 0.75^{k}$, already 94.37 at $k = 10$.

\begin{table}[h]
  \centering
  \footnotesize
  \caption{Pass@$k$ at 2560 network function evaluations, $\numpaths = 10$, seeds 0 and 1 pooled; GSM8K $n{=}600$, MATH $n{=}492$. Left: that estimator at four $k$; bold is the best per column within a regime. Right: the count statistic's pass@$k$ gap over self-consistency, labeled per cell. The pure-diffusion configurations predate the end-of-sequence correction (Appendix~\ref{app:eosfix}), which moves the baseline about two points and neither the crossover nor the ordering for $k \geq 2$.}
  \label{tab:passk}
  \begin{minipage}[t]{0.63\linewidth}
    \vspace{0pt}
    \centering
    \setlength{\tabcolsep}{1.6pt}
    \scriptsize
    \renewcommand{\arraystretch}{0.92}
    \begin{tabular}{l!{\color{black!20}\vrule width 0.5pt}rrrr!{\color{black!20}\vrule width 0.5pt}rrrr}
      \toprule
      & \multicolumn{4}{c}{GSM8K} & \multicolumn{4}{c}{MATH} \\
      Sampler & \scriptsize $k{=}1$ & \scriptsize 2 & \scriptsize 5 & \scriptsize 10 & \scriptsize $k{=}1$ & \scriptsize 2 & \scriptsize 5 & \scriptsize 10 \\
      \midrule
      \multicolumn{9}{l}{\emph{Blocks of 32}} \\
      Self-consistency & \textbf{72.18} & 78.58 & 84.54 & 87.67 & \textbf{31.60} & 39.60 & 48.78 & 55.89 \\
      \methodname, count & 70.93 & \textbf{81.52} & 90.07 & 93.83 & 30.22 & \textbf{40.51} & \textbf{52.12} & \textbf{60.47} \\
      \methodname, coll.-wt. & 66.70 & 80.14 & \textbf{90.20} & \textbf{94.75} & 28.25 & 38.57 & 50.75 & 58.74 \\
      \midrule
      \multicolumn{9}{l}{\emph{Pure diffusion}} \\
      Self-consistency & 53.42 & 56.96 & 60.53 & 63.17 & 23.77 & 27.99 & 32.87 & 36.08 \\
      \methodname, count & 56.19 & 70.82 & 82.40 & 86.83 & 23.37 & 32.68 & 44.08 & 51.52 \\
      \methodname, coll.-wt. & \textbf{63.06} & \textbf{76.23} & \textbf{86.41} & \textbf{91.25} & \textbf{27.79} & \textbf{37.30} & \textbf{48.46} & \textbf{55.79} \\
      \bottomrule
    \end{tabular}
  \end{minipage}\hfill
  \begin{minipage}[t]{0.35\linewidth}
    \centering
    \vspace{0pt}
    \includegraphics[width=\linewidth]{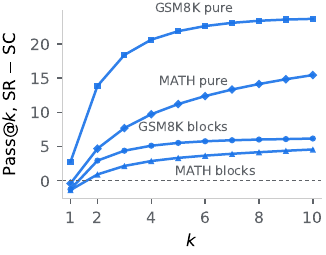}
    \refstepcounter{figure}\label{fig:passk}
    {\footnotesize\textbf{Figure~\thefigure:} the pass@$k$ curves of Table~\ref{tab:passk}; \methodname crosses self-consistency by $k{=}2$ and the gap widens thereafter.}
  \end{minipage}
\end{table}

\begin{table}[h]
  \centering
  \scriptsize
  \caption{Pass@$k$ for every method of Table~\ref{tab:nfe}, same campaigns and settings; pass@1 and pass@10 are Table~\ref{tab:nfe}'s per-sample and coverage columns. TruthfulQA reports $k{=}1$ only, chance alone reaching 94.37 at $k{=}10$; TAPS ($\dagger$) runs the uncorrected protocol. Bold: best per column within a regime.}
  \label{tab:passkmain}
  \setlength{\tabcolsep}{2.5pt}
  \renewcommand{\arraystretch}{0.82}
  \begin{tabular*}{\linewidth}{@{\extracolsep{\fill}}l!{\color{black!20}\vrule width 0.5pt}cccc!{\color{black!20}\vrule width 0.5pt}cccc!{\color{black!20}\vrule width 0.5pt}c@{}}
    \toprule
    & \multicolumn{4}{c}{GSM8K} & \multicolumn{4}{c}{MATH} & TQA \\
    Method & \scriptsize $k{=}1$ & \scriptsize 3 & \scriptsize 5 & \scriptsize 10 & \scriptsize $k{=}1$ & \scriptsize 3 & \scriptsize 5 & \scriptsize 10 & \scriptsize $k{=}1$ \\
    \midrule
    \multicolumn{10}{l}{\emph{Semi-autoregressive blocks of 32}} \\
    Self-consistency & \textbf{70.04} & 82.45 & 86.03 & 89.70 & \textbf{30.02} & 44.14 & 49.78 & 56.81 & \textbf{54.44} \\
    TAPS & 66.77 & 83.28 & 87.53 & 91.00 & 29.26 & 44.03 & 49.84 & 57.32 & 53.53 \\
    ODD ($\alpha{=}64$) & 64.94 & 82.60 & 87.21 & 91.75 & 24.19 & 40.12 & 46.45 & 54.27 & 52.35 \\
    ODD ($\alpha{=}256$) & 66.57 & 83.32 & 87.73 & 92.17 & 22.77 & 39.22 & 45.97 & 54.37 & 49.85 \\
    \cmidrule(l{2pt}r{2pt}){1-10}
    \methodname, count & 68.21 & 85.06 & 89.50 & 93.53 & 27.67 & \textbf{44.22} & \textbf{50.98} & \textbf{59.86} & 48.07 \\
    \methodname, count, $\tau{=}0$ & 69.33 & 85.67 & 90.02 & 94.33 & 27.13 & 43.56 & 49.80 & 57.11 & 46.92 \\
    \methodname, exp.\ count & 66.47 & \textbf{85.74} & \textbf{90.61} & \textbf{94.67} & 26.78 & 43.24 & 50.28 & 59.65 & 50.75 \\
    \methodname, collision-wt.\ & 61.87 & 84.72 & 90.15 & 94.42 & 24.44 & 40.75 & 47.46 & 55.39 & 52.78 \\
    \midrule[0.12em]
    \multicolumn{10}{l}{\emph{Pure diffusion (whole response as one block)}} \\
    Self-consistency & 54.47 & 61.55 & 63.95 & 67.00 & 22.87 & 29.06 & 32.37 & 37.40 & 54.17 \\
    TAPS$^\dagger$ & 50.34 & 61.75 & 65.58 & 69.75 & 22.46 & 30.81 & 34.57 & 39.74 & \textbf{55.26} \\
    \cmidrule(l{2pt}r{2pt}){1-10}
    \methodname, count & 53.85 & 75.65 & 81.87 & 87.75 & 22.62 & 36.53 & 42.66 & 50.51 & 53.79 \\
    \methodname, count, $\tau{=}0$ & 53.82 & 75.52 & 81.68 & 87.17 & 23.17 & 36.71 & 42.64 & 50.41 & 52.85 \\
    \methodname, exp.\ count & 53.87 & 77.53 & 84.31 & \textbf{90.33} & 20.47 & 34.43 & 40.73 & 48.78 & 50.78 \\
    \methodname, collision-wt.\ & \textbf{59.81} & \textbf{80.14} & \textbf{85.26} & 90.00 & \textbf{25.10} & \textbf{40.38} & \textbf{47.11} & \textbf{55.79} & 50.50 \\
    \bottomrule
  \end{tabular*}
\end{table}

\subsection{End-of-sequence correction}
\label{app:eosfix}

\looseness=-1 The pure-diffusion rows of Table~\ref{tab:nfe} apply a remedy the model's authors attach to this regime: LLaDA sets the confidence of the end-of-sequence token to zero during pure-diffusion sampling on GSM8K, MATH and three other benchmarks, because the padding of its supervised data otherwise ends a decode early \citep{nie2025llada}. Two questions attach to the correction. What is it worth, since our earlier campaigns omit it; and does the penalty's margin merely substitute for it, since with the content mask off peers' committed end-of-sequence tokens enter the count and the penalty is pushed off that token. Table~\ref{tab:eosfix} decides both: every configuration applies the correction, and one configuration additionally turns the content mask on, which removes the token from the count entirely so the penalty cannot substitute for anything.

\begin{table}[h]
  \centering
  \footnotesize
  \caption{The regime re-decoded with the released correction applied to every configuration. GSM8K under pure diffusion, $\numpaths = 10$, 128 steps, temperature 0.6, $n = 600$, two seeds. The third row additionally excludes the end-of-sequence token from the count. The rows with the mask off are the same campaign as Table~\ref{tab:nfe}'s pure-diffusion block; the first row is the baseline. The last column is the difference in plurality from that baseline.}
  \label{tab:eosfix}
  \setlength{\tabcolsep}{5pt}
  \begin{tabular}{llrrrrr}
    \toprule
    Sampler & Mask & Plurality ($\uparrow$) & Coverage ($\uparrow$) & Per-sample ($\uparrow$) & Disagr. & $\Delta$ plur. \\
    \midrule
    Self-consistency & off & 57.02 & 67.00 & 54.47 & 20.40 & --- \\
    \addlinespace
    \methodname, count & off & 71.53 & 87.75 & 53.85 & 49.73 & $+14.50$ \\
    \methodname, count & \textbf{on} & 70.44 & 88.00 & 53.07 & 50.45 & $+13.42$ \\
    \methodname, coll.-wt.\ & off & 76.27 & 90.00 & 59.81 & 45.93 & $+19.25$ \\
    \bottomrule
  \end{tabular}
\end{table}

\looseness=-1 The correction is material, and it does not explain the margin. It lifts the baseline from the 54.39 of Table~\ref{tab:abl-blocks} to 57.02 and removes the truncation the omission caused: no sample here fails to parse, against 0.64\% and 0.81\% of samples in the uncorrected pair. The count penalty's margin over the baseline moves by only 1.41 with the correction, from 15.91 to 14.50, so about a point of the uncorrected pure-diffusion margin was ours rather than the sampler's; the 2.63 is the baseline's own lift. What remains is not two points. With the correction applied to both sides the collision-weighted penalty leads by 19.25, and with the end-of-sequence token removed from the count altogether, so that the penalty has no access to the channel in question, the count statistic still leads by 13.42 on the vote and by 21.00 on coverage. Turning the mask on costs 1.08 points of plurality, so the channel carries none of the effect.

\looseness=-1 The same campaign covers all three benchmarks and populates Table~\ref{tab:nfe}'s pure-diffusion block. Against the corrected baseline the margins are $+13.96$ to $+19.25$ on GSM8K and $+5.40$ to $+11.37$ on MATH; on TruthfulQA the temperature-zero configuration leads by $3.42$. Two protocols remain uncorrected: TAPS, whose released loop exposes no hook for the correction, and the 2560-evaluation campaigns of Tables~\ref{tab:grids} and~\ref{tab:main}.

\end{document}